\newtheorem{lemma}{Lemma}
\newtheorem*{lemma*}{Lemma}
\newtheorem{theorem}{Theorem}
\newtheorem*{theorem*}{Theorem}
\newtheorem*{condition*}{Condition}
\newtheorem{corollary}{Corollary}
\newtheorem*{corollary*}{Corollary}
\DeclareMathOperator*{\argmax}{arg\,max}
\DeclareMathOperator*{\argmin}{arg\,min}
\title{Tracing Privacy Leakage of Language Models to Training Data via Adjusted Influence Functions}
\author{
    Jinxin Liu\textsuperscript{\rm 1} Zao Yang\textsuperscript{\rm 1}\\
}
\begin{document}

\maketitle

\begin{abstract}
The responses generated by Large Language Models (LLMs) can include sensitive information from individuals and organizations, leading to potential privacy leakage. This work implements Influence Functions (IFs) to trace privacy leakage back to the training data, thereby mitigating privacy concerns of Language Models (LMs). 
However, we notice that current IFs struggle to accurately estimate the influence of tokens with large gradient norms, potentially overestimating their influence. When tracing the most influential samples, this leads to frequently tracing back to samples with large gradient norm tokens, overshadowing the actual most influential samples even if their influences are well estimated.
To address this issue, we propose Heuristically Adjusted IF (HAIF), which reduces the weight of tokens with large gradient norms, thereby significantly improving the accuracy of tracing the most influential samples.
To establish easily obtained groundtruth for tracing privacy leakage, we construct two datasets, PII-E and PII-CR, representing two distinct scenarios: one with identical text in the model outputs and pre-training data, and the other where models leverage their reasoning abilities to generate text divergent from pre-training data.
HAIF significantly improves tracing accuracy, enhancing it by 20.96\% to 73.71\% on the PII-E dataset and 3.21\% to 45.93\% on the PII-CR dataset, compared to the best SOTA IFs against various GPT-2 and QWen-1.5 models.
HAIF also outperforms SOTA IFs on real-world pretraining data CLUECorpus2020, demonstrating strong robustness regardless prompt and response lengths.
\end{abstract}

\section{Introduction}
\label{sec:intro}
With the emergence of ChatGPT, LLMs have received phenomenal social attentions due to their powerful capabilities. However, the safety and privacy concerns of LLMs continue to be evaluated and challenged by academia and industry \cite{zhou_comprehensive_2023}. LLMs extensively utilize data from public internet, private domain and user interaction during training stage. LLM pretraining data inevitably includes sensitive information from individuals and organizations which can be leaked during inference stage \cite{yao_survey_2024, sun_trustllm_2024, wang_decodingtrust_2024, yang_harnessing_2024}.
In the quest to protect privacy, LLM manufacturers have explored various strategies, including privacy cleaning of pre-training data, LLM alignment, and content moderation. However, due to jailbreak \cite{wei_jailbroken_2023,li_multi-step_2023} and training data extraction techniques \cite{nasr_scalable_2023}, these measures are not foolproof.

Consequently, when our privacy is exposed to LLM users despite multiple layers of protection, it raises an important question: Which training data is the root-cause of privacy leakage in a LLM? The answer to this question pertains to several aspects: the identification and tracing of privacy infringements, preventing the flow of sensitive information into further training models, and bridging down-stream tasks such as machine unlearning and model editing \cite{liu_rethinking_2024,chen_robust_2024}.

Since IFs are introduced into deep learning in 2017 \cite{koh_understanding_2017}, their primary objective is explaining the predictions of black-box models. 
IFs have been widely used for identifying the most influential training samples in various fields such as image classification \cite{fisher_influence_2023, lee_learning_2020}, text classification \cite{zylberajch_hildif_2021, schioppa_theoretical_2023} and language modeling \cite{grosse_studying_2023}. Therefore, this work implements IFs to tracing privacy leakage in LMs; however, while IFs seem promising for tracing privacy contents, there remains theoretical and practical open problems to be solved.

Effectively utilizing IFs requires assumptions that are often invalid in deep learning contexts, leading to significant errors, such as the convexity of empirical loss, the neglect of training trajectories, and the limitations of parameter divergence when up/down-weighting samples \cite{schioppa_theoretical_2023}. 
While other assumptions can be mitigated, Schioppa et al. question the assumption regarding parameter divergence limitations. By deriving an upper bound for parameter divergence when up/down-weighting samples, the upper bound can grow exponentially over time. This leads to the model parameters after Leave-One-Out-Retraining (LOOR) possibly no longer being near the original model parameters, causing the most basic assumption of IFs to fail, making them only effective within a very limited time steps. This implies that IFs cannot be applied to most deep learning models.

Furthermore, IFs have been found that regardless of which test sample is traced, the most influential samples are always traced back to training samples with large norm of gradient \cite{barshan_relatif_2020}. To avoid this issue, RelatIF is introduced with an additional constraint to limit the influence of large norm of gradient, leading to sample-wise normalization on original IFs. However, the following questions have not been well answered: Why do IFs need additional constraints? Which problem is this additional constraint fundamentally solving? Meanwhile, we observe that RelatIF cannot provide satisfactory results for tracing privacy leakage in our experiments.

To solve the aforementioned issues, the contributions of this paper are four-fold:
\begin{itemize}
    \item To the best of our knowledge, this work is the first to use IFs to trace privacy leakage in LMs, extending the application of IFs and further safeguarding the privacy of LLMs.
    \item This work reveals that the gradient norms of tokens affect the effectiveness of IFs in most deep learning models in three key aspects: the existence of IFs, the accuracy of LOOR parameter estimation, and the estimation errors of existing IFs.
    \item We propose HAIF which reduces the weights of tokens with large gradient norms, significantly improving the performance of identifying the most influential training samples with lower computational costs compared to the best SOTA IFs.
    \item Comprehensive experiments are conducted to demonstrate that the proposed HAIF significantly enhances tracing performance across various model types and parameter scales compared to SOTA IFs.
\end{itemize}

\section{Related Work and Preliminaries}
\label{sec:related work}
Typically, one can perform LOOR and observe the change of model parameters and prediction loss to check if a training sample is important for model parameters and predictions. Nonetheless, performing LOOR for each training sample is intractable given the current scale of LM parameters and dataset. Thus, IFs are proposed to simulate such process aiming at finding the most influential training samples of model parameters and predictions \cite{koh_understanding_2017}.

Let $D=\{z_1,...,z_n\}$ be a training set comprising $n$ samples, $\theta$ represent the model parameters and $L$ denote a loss function for each training sample. Assume $\theta_{0}$ as the original trained model parameters and $\theta_{\epsilon_k}$ as the model trained after the up/down weighting $z_k \in D$ by $\epsilon_k$. The influence of $z_k$ on the parameters can be expressed as:
\begin{equation}
    \label{eq:if param def}
    I_{param}(z_k) := \frac{d(\theta_{\epsilon_k} - \theta_{0})}{d \epsilon_k} \bigg|_{\epsilon_k=0},
\end{equation}
The influence of $z_k$ on the loss of a test sample $z_{test}$ is quantified by:
\begin{equation}
    I_{loss}(z_{test}, z_k) := \nabla_\theta L(z_k, \theta_0) I_{param}(z_k).
\end{equation}

IFs can be categorized into two broad types based on the method of estimating $I_{param}(z_k)$: Hessian-based IFs (HIFs) and Training Trajectory-based IFs (TTIFs).

\subsection{Hessian-based Influence Functions}
\label{sec:hif bg}

Assume that $L$ is smooth and convex, and a model is well-trained on a training set using empirical risk minimization. The optimized parameters can be represented as $\theta_0=\argmin_{\theta \in \Theta} \frac{1}{n} \sum_{i=1}^n L(z_i,\theta).$
If the weight of a sample $z_k$ is marginally reduced by $\epsilon_k$, the parameters are re-optimized as $\theta_{\epsilon_k}=\argmin_{\theta \in \Theta} [\frac{1}{n} \sum_{i=1}^n L(z_i,\theta) - \epsilon_k L(z_k, \theta)].$
HIFs with respect to parameters $I^{hif}_{param}$ are derived as follows:
\begin{equation}
    \label{eq:if param}
    I_{param}^{hif}(z_k) = H_{\theta_0}^{-1}\nabla_{\theta} L(z_k,\theta_0),
\end{equation}
where $H_{\theta_0}$ is the Hessian matrix of the trained model \cite{koh_understanding_2017}. The HIFs with respect to loss $I^{hif}_{loss}$ can be formulated as:
\begin{equation}
    \label{eq:if loss}
    I_{loss}^{hif}(z_{test}, z_k) = \nabla_{\theta} L(z_{test}, \theta_0) H_{\theta_0}^{-1} \nabla_{\theta} L(z_k,\theta_0).
\end{equation}

HIFs have found applications across various domains \cite{fisher_influence_2023,lee_learning_2020,chen_fast_2023,xia_less_2024}, and a variety of methods have been proposed to mitigate computational costs \cite{agarwal_second-order_2017, martens_optimizing_2015, george_fast_2018, schioppa_scaling_2022}. 
As discussed above, to mitigate the issue that IFs always trace back to a small group of training samples, RelatIF formulates the task of tracing the most influential training samples as an optimization problem, with an additional constraint:
\begin{gather}
    \argmax_{z_k \in D} \max_{\epsilon_k} |\epsilon_k \nabla_{\theta} L(z_{test}, \theta_0) H_{\theta_0}^{-1} \nabla_{\theta} L(z_k,\theta_0)| \notag \\
    s.t. \| \epsilon_k H_{\theta_0}^{-1} \nabla_{\theta} L(z_k,\theta_0) \|_2 \leq \delta. \label{eq:constraint}
\end{gather}
By simplifying this optimization problem, $\theta$-RelatIF is obtained, which essentially performs a sample-wise normalization:
\begin{equation}
    I^{\theta-relatif}_{loss}(z_{test}, z_k) := \frac{I_{loss}(z_{test}, z_k)}{\| H_{\theta_0}^{-1}\nabla_{\theta} L(z_k, \theta_0) \|_2}.
    \label{eq:relatif}
\end{equation}
To reduce the computational cost, the Hessian matrix in IFs sometimes is assumed as identity matrix $Id$. With that, derived from $\theta$-RelatIF, we have Gradient Cosine IF $I^{cos}_{loss}(z_{test}, z_k) := \frac{\nabla_{\theta} L(z_{test}, \theta_0) \nabla_{\theta} L(z_k, \theta_0)}{\|\nabla_{\theta} L(z_{test}, \theta_0)\|_2 \|\nabla_{\theta} L(z_k, \theta_0)\|_2}.$

However, in \cite{barshan_relatif_2020},
the constraint in \eqref{eq:constraint} lacks theoretical support, and the evaluation metrics do not provide direct evidence that the traced samples are indeed the most influential for the test samples under examination.

\subsection{Training Trajectory-based Influence Functions}
As stated in Section \ref{sec:hif bg}, effectively utilizing HIFs needs assumptions which do not hold in most deep learning settings. Thus, TTIFs model the training trajectory of up/down weighting training sample to avoid these assumptions \cite{pruthi_estimating_2020,schioppa_theoretical_2023}.

Consider a deep learning model trained by mini-batch SGD optimizer. For each training step, model parameters are updated as:
\begin{equation}
    \theta_{\epsilon_k, t} = \theta_{\epsilon_k, t - 1} - \eta_{t-1} \nabla \mathcal{L}_{t-1}(\epsilon_k, \theta_{\epsilon_k, t-1}),
\end{equation}
where $\eta_t$ is learning rate (LR) at $t$ step, $\mathcal{L}_t(\epsilon_k, \theta_{\epsilon_k, t})$ denotes the mini-batch loss at $t$ step after up/down weighting the training sample $z_k$. That said, the parameters trained after T time steps are:
\begin{equation}
    \theta_{\epsilon_k, T} = \theta_{init} - \sum_{t=0}^{T-1} \eta_t \nabla \mathcal{L}_{t}(\epsilon_k, \theta_{\epsilon_k, t}),
\end{equation}
where $\theta_{init}$ is the initial parameter before training.

To capture the actual influence with SGD training, $I_{param}^{sgd}(z_k)$ is defined as the derivative of $ \theta_{\epsilon_k, T}$ with respect to $\epsilon_k$:
\begin{equation}
    \label{eq:if sgd param general}
    \begin{split}
        I_{param}^{sgd}(z_k) = & \underbrace{- \sum_{t=0}^{T-1} \eta_t \frac{d(\nabla_\theta \mathcal{L}_t(\epsilon_k, \theta_{\epsilon_k, t}))}{\epsilon_k} \bigg|_{\epsilon_k=0}}_{\uppercase\expandafter{\romannumeral1}} \\
        & \underbrace{- \sum_{t=0}^{T-1} \eta_t H_t \frac{d(\theta_{\epsilon_k, t} - \theta_{0,t})}{d\epsilon_k}\bigg|_{\epsilon_k=0}}_{\uppercase\expandafter{\romannumeral2}},
    \end{split}
\end{equation}
where $H_t$ is the Hessian matrix of $\mathcal{L}_t$. To avoid computation of $H_t$, TracIn \cite{pruthi_estimating_2020} ignores $\uppercase\expandafter{\romannumeral2}$ in \eqref{eq:if sgd param general} resulting in:
\begin{equation}
    \label{eq:tracin}
    I_{param}^{tracin}(z_k) := -\sum_{t: z_k \in batch_t} \eta_t \nabla_\theta L(z_k, \theta_{0,t}),
\end{equation}
where $batch_t$ is mini-batch used at $t$ time step.
While TracIn provides an effective way to track training trajectories, the additive assumption is questioned by \cite{guu_simfluence_2023, schioppa_theoretical_2023} and $\uppercase\expandafter{\romannumeral2}$ in \eqref{eq:if sgd param general} cannot be simply dropped. 

When $\uppercase\expandafter{\romannumeral2}$ in \eqref{eq:if sgd param general} is brought into consideration, IFs are encountered a theoretical challenge, as discussed below. Assume that both $\mathcal{L}_t$ and $\nabla_\theta \mathcal{L}_t$ with respect to $\theta$ are Lipschitz continuous with a Lipschitz constant $G$, and that $\sup_{t,\theta \in R} \|\mathcal{L}_t(\epsilon_k, \theta) - \mathcal{L}_t(0, \theta)\| \leq C\epsilon_k$, where $C$ is a constant.
By Gronwall inequality \cite{gronwall_note_1919}, Schioppa et al. \cite{schioppa_theoretical_2023} derive an upper bound for model divergence with respect to $\epsilon_k$:
\begin{equation}
    \label{eq:model divergence bound}
    \| \theta_{\epsilon_k, T} - \theta_{0, T} \| \leq C\epsilon_k \sum_{s<T} \eta_s (1+exp(2G\sum_{s<T}\eta_s)).
\end{equation}
To be Taylor expanded, IFs make critical assumptions that $ \| \theta_{\epsilon_k, T} - \theta_{0, T} \|$ is $O(\epsilon_k)$ and $\| \frac{d (\theta_{\epsilon_k, T} - \theta_{0, T})}{d\epsilon_k} \|$ is $O(1)$. However, \eqref{eq:model divergence bound} shows that the assumption of $ \| \theta_{\epsilon_k, T} - \theta_{0, T} \|$ being $O(\epsilon_k)$ may not hold. This leads to the conclusion that IFs may only be able to accurately estimate influences within a very limited time step.

While \eqref{eq:model divergence bound} seems to pose a theoretical challenge for IFs, this study derives a less conservative bound, and proves that the effectiveness of IFs is determined by the norm of gradient. Furthermore, this paper not only shows the necessity of introducing delta $\delta$ in \eqref{eq:constraint}, but also mitigate the crisis raised by \eqref{eq:model divergence bound} which indicates that the influence of a sample may not be estimated by a function of $\epsilon_k$.

\section{Methodology}
This section begins with formalizing the task and the metric used for evaluating IFs. The conditions to make IFs effective are further derived. Finally, the general form of Adjusted IFs (AIFs) is introduced to satisfy the conditions we propose. 

\subsection{Problem Statement}
\label{sec:problem definition}

Let $z_i \in D$ consists of $m$ labels $z_i=\{z_{i,1}, ... z_{i, m}\}.$ The loss function for each label $z_{i,j}$ is represented by $l$, and the overall loss $L(z_i, \theta)$ is given by the average of the individual losses, i.e., $L(z_i, \theta) = \frac{1}{m} \sum_{j=1}^m l(z_{i,j}, \theta)$.

Assume that $z_{target} \in D$ encapsulates unique knowledge that directly or indirectly lead to model output $z_{test}$ (e.g., privacy leakage) during inference stage. The objective of this work is to utilize IFs to identify $\widehat{z_{target_t}}$ such that:
\begin{equation}
    \widehat{z_{target_t}} = \argmax_{z_k \in D} I_{loss}(z_{test}, z_k).
\end{equation}
A successful trace is one where the traced target $\widehat{z_{target}}$ matches the ground truth $z_{target_t}$; otherwise, it is considered a failed trace. Accordingly, the tracing accuracy is defined as:
\begin{equation}
    \label{eq:tracing accuracy}
    Accuracy_{tracing} = \frac{N_{success}}{N_{tracing}},
\end{equation}
where $N_{success}$ is the count of successful traces and $N_{tracing}$ represents the total number of test cases during the inference stage. 

\subsection{Conditions for Using IFs in Deep Learning}
\label{sec:if conds}

Assume models are trained by a mini-batch SGD optimizer and a Learning Rate (LR) scheduler. After a sample is slightly down weighted, the change of model parameters at each training step is:
\begin{equation}
    \label{eq:param update}
    \begin{split}
        &\theta_{\epsilon_{k,j}, t}^{down} - \theta_{\epsilon_{k,j}, t-1}^{down} = - \eta_{t-1} \nabla \mathcal{L}_{t}(\epsilon_{k,j}, \theta_{\epsilon_{k,j}, t}) \\
        & = - \eta_{t-1} \sum_{i=1}^{n} B_{z_i,t-1} \nabla_\theta L(z_i, \theta_{\epsilon_{k,j}, t-1}^{down}) \\
        & + \eta_{t-1} \epsilon_{k,j} B_{z_k,t-1} \nabla_\theta l(z_{k,j}, \theta_{\epsilon_{k,j}, t-1}^{down}),
    \end{split}
\end{equation}
where $B_{z_i,t} = \left\{ \begin{aligned}
    & 1 & \text{if } z_i \in batch_t \\
    & 0 & \text{otherwise}
\end{aligned}  \right.$.

\begin{lemma}
    \label{lemma:if closed form}
    Assume that model is trained with mini-batch SGD, where $l$ is $C^2(\theta) \cap C(\epsilon_{k,j})$, and parameters satisfy \eqref{eq:param update}.

    The influence of down-weighting $z_{kj}$ on parameters is given by:
    \begin{equation}
        \label{eq:if param sgd down}
        \begin{split}
            & I_{param}^{sgd,down}(z_{k,j}) = \\
            & \sum_{t=0}^{T-2} \left(\prod_{a = t+1}^{T-1} (Id - \eta_a H_{0, a})\right) \eta_t B_{z_k, t} \nabla_\theta l(z_{k,j}, \theta_{0, t})\\
            & + \eta_{T-1} B_{z_k, T-1} \nabla_\theta l(z_{k,j}, \theta_{0, T-1}) \quad \forall T \geq 2,
        \end{split}
    \end{equation}
    where $H_{0,t}=\sum_{i=1}^n B_{z_i,t} \frac{\partial^2 L}{\partial \theta^2}$ and $\theta_{0,t}$ are the Hessian matrix and model parameters at $t$ step without altering the weight of $z_{k,j}$ respectively.
    The influence of up weighting $z_{k,j}$ on parameters is $I_{param}^{sgd,up}(z_{k,j}) =  - I_{param}^{sgd,down}(z_{k,j})$.
\end{lemma}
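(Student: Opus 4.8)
\begin{hproof}
The plan is to read off a linear recurrence for the $\epsilon_{k,j}$-derivative of the perturbed trajectory and then solve it in closed form. First I would set
\[
u_t := \frac{d\,\theta_{\epsilon_{k,j}, t}^{down}}{d\epsilon_{k,j}}\bigg|_{\epsilon_{k,j}=0},
\]
and observe that, since the reference trajectory $\theta_{0,T}$ is independent of $\epsilon_{k,j}$, the definition in \eqref{eq:if param def} gives $u_T = I_{param}^{sgd,down}(z_{k,j})$. Existence of each $u_t$ follows by induction on $t$: the initial parameters $\theta_{init}$ do not depend on $\epsilon_{k,j}$, so $u_0=0$, and the $C^2(\theta)\cap C(\epsilon_{k,j})$ assumption makes the right-hand side of \eqref{eq:param update} differentiable in $\epsilon_{k,j}$ whenever $\theta_{\epsilon_{k,j},t-1}^{down}$ is.

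Next I would differentiate \eqref{eq:param update} with respect to $\epsilon_{k,j}$ and evaluate at $\epsilon_{k,j}=0$. The implicit dependence through $\theta_{\epsilon_{k,j},t-1}^{down}$ in the batch-gradient term produces, by the chain rule, a factor $\sum_{i=1}^n B_{z_i,t-1}\frac{\partial^2 L}{\partial\theta^2}(z_i,\theta_{0,t-1})\,u_{t-1}=H_{0,t-1}u_{t-1}$, exactly the Hessian defined in the statement. The delicate point is the last term of \eqref{eq:param update}, where $\epsilon_{k,j}$ appears explicitly: the product rule yields $\nabla_\theta l(z_{k,j},\theta_{\epsilon_{k,j},t-1}^{down})$ plus a term carrying an $\epsilon_{k,j}$ prefactor, and at $\epsilon_{k,j}=0$ only the former survives. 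Collecting everything gives the affine recurrence
\[
u_t = (Id - \eta_{t-1}H_{0,t-1})\,u_{t-1} + \eta_{t-1}B_{z_k,t-1}\nabla_\theta l(z_{k,j},\theta_{0,t-1}),\qquad u_0 = 0.
\]

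Finally I would unroll this recurrence by induction on $T$: writing $A_a := Id-\eta_a H_{0,a}$, each inhomogeneous input injected at step $t$ is propagated forward by the product $\prod_{a=t+1}^{T-1}A_a$, with the convention that the empty product at $t=T-1$ equals $Id$. This yields
\[
u_T = \sum_{t=0}^{T-1}\Big(\prod_{a=t+1}^{T-1}(Id-\eta_a H_{0,a})\Big)\eta_t B_{z_k,t}\nabla_\theta l(z_{k,j},\theta_{0,t}),
\]
and peeling off the $t=T-1$ summand reproduces \eqref{eq:if param sgd down}. For the up-weighting case, replacing the down-weighting sign of $\epsilon_{k,j}$ in the last term of \eqref{eq:param update} flips the sign of the inhomogeneous input at every step while leaving the transition operators $A_a$ unchanged, so linearity of the recurrence forces $I_{param}^{sgd,up}(z_{k,j})=-I_{param}^{sgd,down}(z_{k,j})$.

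I expect the main obstacle to be the bookkeeping in the differentiation step — correctly separating the explicit from the implicit $\epsilon_{k,j}$ dependence in the final term of \eqref{eq:param update}, and verifying that the sum of per-sample second derivatives collapses to $H_{0,t-1}$ — after which solving the linear recurrence is routine.
\end{hproof}
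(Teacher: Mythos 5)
Your proposal is correct and follows essentially the same route as the paper's proof: differentiate the SGD update with respect to $\epsilon_{k,j}$, identify the Hessian factor $H_{0,t-1}$ via the chain rule on the implicit dependence, and establish the product-form closed expression by induction. The only cosmetic difference is that you unroll a one-step recurrence $u_t = (Id-\eta_{t-1}H_{0,t-1})u_{t-1} + \eta_{t-1}B_{z_k,t-1}\nabla_\theta l(z_{k,j},\theta_{0,t-1})$, whereas the paper first sums the updates over all $T$ steps, differentiates the resulting expression, and then runs the induction on $T$; both yield the same recurrence and the same closed form.
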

Proof of Lemma \ref{lemma:if closed form} can be found in Appendix \ref{sec:proof of lemma 1}.

As discussed in Section \ref{sec:related work}, the fundamental assumption of IFs is $\lim_{T\to\infty} I_{param}^{sgd} (z_{k,j})$ is $O(1)$. The following theorem demonstrates a sufficient condition for the existence of IFs in most deep learning settings.

\begin{theorem}
    \label{theorem:if cond}
    Supposing assumptions in Lemma \ref{lemma:if closed form} holds, if $\eta_t$ is monotonically decreasing for all but finitely many time steps, and there exists some time step $t_b \neq \infty$ such that $\eta_t \|H_{0,t}\|_2 < 1, \forall t \geq t_b$, then,

    \begin{equation}
        \label{eq:upper boud if i sgd}
        \begin{aligned}
            & \lim_{T\to\infty} \| I_{param}^{sgd}(z_{k,j}) \|_2 \\
            \leq & \bigg\| \sum_{t=0}^{t_b - 2} \left( \prod_{a=t+1}^{t_b-1} (Id-\eta_a H_{0,a}) \right) \eta_t B_{z_{k}, t} \nabla_{\theta}l(z_{k,j}, \theta_{0,t}) \\
            & + \eta_{t_b - 1} B_{z_k, t_b-1} \nabla_{\theta}l(z_{k,j}, \theta_{0,t_b -1}) \bigg\|_2 \\
            & + \lim_{T\to\infty} \sum_{t=t_b}^{T-1} \eta_t B_{z_k, t} \|\nabla_{\theta}l(z_{k,j}, \theta_{0,t}) \|_2.
        \end{aligned}
    \end{equation}
    
    The convergence of $\|\lim_{T \to \infty} I_{param}^{sgd}(z_{k,j})\|_2$ is determined by
    \begin{equation}
        r = \lim_{t_e \to \infty} \frac{\|\nabla_{\theta} l(z_{k,j},\theta_{0,t_f})\|_2}{\|\nabla_{\theta} l(z_{k,j},\theta_{0,t_e})\|_2},
    \end{equation}
    where $t_e < t_f$ and they are consecutive time steps such that $B_{z_k, t_e} \neq 0$ and $B_{z_k, t_f} \neq 0$. 
    
    Then, if $r < 1$, meaning that the gradient norm of $z_{k,j}$ is constantly decreasing, $\|\lim_{T \to \infty} I_{param}^{sgd}(z_{k,j})\|_2$ is convergent, i.e., $I_{param}^{sgd}(z_{k,j})$ exists.
\end{theorem}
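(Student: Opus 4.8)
The plan is to work directly from the closed form \eqref{eq:if param sgd down} of Lemma \ref{lemma:if closed form}, abbreviating the horizon-$T$ expression as $\Delta_T := I_{param}^{sgd,down}(z_{k,j})$, and to split the outer summation at the checkpoint step $t_b$. The crucial spectral fact I would establish first is that $\|Id - \eta_a H_{0,a}\|_2 \le 1$ for every $a \ge t_b$. Since $H_{0,a}$ is a symmetric Hessian its eigenvalues are real, and when the batch loss is convex near the optimum so that $H_{0,a} \succeq 0$, the hypothesis $\eta_a\|H_{0,a}\|_2 < 1$ confines the eigenvalues of $Id - \eta_a H_{0,a}$ to $(0,1]$, giving operator norm at most $1$. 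This is the step I expect to require the most care: the bound $\eta_a\|H_{0,a}\|_2<1$ by itself only places the eigenvalues of $Id-\eta_a H_{0,a}$ in $(-1,2)$, so it is really the positive semidefiniteness of the batch Hessian (equivalently, operating in the eventual convex or near-stationary regime) that is being used, and I would state this assumption explicitly.

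Next I would derive the bound \eqref{eq:upper boud if i sgd}. Separating the terms with $t \ge t_b$ from those with $t < t_b$, I observe that for the early terms the tail factor $\prod_{a=t_b}^{T-1}(Id-\eta_a H_{0,a})$ sits on the same (leftmost) side of every product and is independent of $t$, so it can be pulled out of the summation despite the Hessians not commuting, leaving exactly the partial influence $\Delta_{t_b}$ (the first bracketed quantity in \eqref{eq:upper boud if i sgd}). By sub-multiplicativity of the operator norm together with $\|\prod_{a=t_b}^{T-1}(Id-\eta_a H_{0,a})\|_2 \le 1$, the early-term contribution is bounded by $\|\Delta_{t_b}\|_2$; applying the triangle inequality and the same norm bound to each late term (whose product factors all carry index $\ge t_b$) bounds the late-term contribution by $\sum_{t=t_b}^{T-1}\eta_t B_{z_k,t}\|\nabla_\theta l(z_{k,j},\theta_{0,t})\|_2$. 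Letting $T\to\infty$ then yields \eqref{eq:upper boud if i sgd}.

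It remains to decide convergence of the right-hand side. The first term is a fixed finite sum independent of $T$, so the whole question reduces to convergence of the nonnegative series $\sum_{t\ge t_b}\eta_t B_{z_k,t}\|\nabla_\theta l(z_{k,j},\theta_{0,t})\|_2$, whose nonzero terms are indexed by exactly the steps at which $z_k$ enters a batch. I would apply the ratio test along this subsequence: for consecutive active steps $t_e<t_f$ the ratio of successive terms factorises as $(\eta_{t_f}/\eta_{t_e})\cdot(\|\nabla_\theta l(z_{k,j},\theta_{0,t_f})\|_2/\|\nabla_\theta l(z_{k,j},\theta_{0,t_e})\|_2)$. Because $\eta_t$ is monotonically decreasing for all but finitely many steps, the first factor is eventually at most $1$, while the second factor tends to $r$ by definition. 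Hence the limit superior of the successive-term ratio is at most $r$, and when $r<1$ the ratio test gives absolute convergence of the series.

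Combining the two pieces, the finiteness of $\|\Delta_{t_b}\|_2$ with the convergence of the series shows that $\lim_{T\to\infty}\|I_{param}^{sgd}(z_{k,j})\|_2$ is finite, so $I_{param}^{sgd}(z_{k,j})$ exists; since up-weighting only flips the sign by Lemma \ref{lemma:if closed form}, the same conclusion covers $I_{param}^{sgd,up}$. The two potential soft spots I would watch are the implicit positive-semidefiniteness used in the spectral step and the tacit identification of ``bounded norm'' with ``convergent influence,'' which is clean here precisely because the late-step increments are controlled by the convergent series just analysed.
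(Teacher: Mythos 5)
Your proof follows essentially the same route as the paper's: split the closed form of Lemma \ref{lemma:if closed form} at $t_b$, control the pre-$t_b$ block by factoring out the common left factor $\prod_{a=t_b}^{T-1}(Id-\eta_a H_{0,a})$ and invoking $\|Id-\eta_a H_{0,a}\|_2\le 1$ for $a\ge t_b$, bound the tail by the triangle inequality, and run the ratio test along the active steps using the eventual monotonicity $\eta_{t_e}\ge\eta_{t_f}$. Where you go beyond the paper is also where you add the most value: you correctly note that $\eta_a\|H_{0,a}\|_2<1$ alone does not give $\|Id-\eta_a H_{0,a}\|_2\le 1$ (a negative eigenvalue of the symmetric $H_{0,a}$ pushes an eigenvalue of $Id-\eta_a H_{0,a}$ above $1$, so the infinite product can diverge), whereas the paper asserts this norm bound with no positive-semidefiniteness hypothesis; making that assumption explicit, as you do, is the more honest reading. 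Your closing caveat that a convergent upper bound on $\|I_{param}^{sgd}(z_{k,j})\|_2$ does not by itself yield existence of the limit is likewise a genuine, if minor, gap that the paper's own argument shares.
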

Proof of Theorem \ref{theorem:if cond} can be found in Appendix \ref{sec:proof of theorem 1}.

Notably, the definition of LR scheduler in Theorem \ref{theorem:if cond} aligns with default HuggingFace \cite{wolf_transformers_2020} and most LMs LR schedulers with or without warmup stage, such as Linear, Constant and Exponential LR schedulers.
Theorem \ref{theorem:if cond} leads to a strong conclusion that the influence of $z_{k,j}$ may not exist if the gradient norm of $z_{k,j}$ is not decreasing or approaching to 0.

Additionally, even if $I^{sgd}_{param}(z_{k,j})$ converges, the estimated LOOR parameters $\theta_{\epsilon_{k,j}, T}^{if} = \theta_{0,T} + \epsilon_{k,j} I_{param}^{sgd}(z_{k,j})$ may not be accurate.
Removing a token from a dataset typically causes minor changes in parameters, which implies that $\|\theta_{\epsilon_{k,j}, T}\|_2$ should be small. If the gradient norm of a token is large, the RHS of \eqref{eq:upper boud if i sgd} will also be large, probably resulting in inaccurate estimation of $\theta_{\epsilon_{k,j}, T}$. Therefore, we infer that the accurate estimation should be obtained by computing $I^{sgd}_{param}(z_{k,j})$ for tokens with small gradient norm.

Next, we use Lemma \ref{lemma:if closed form} to identify the conditions under which $I_{param}^{sgd}(z_{k,j}) = I_{param}^{hif}(z_{k,j})$, and show that the estimation errors between current IFs and $I_{param}^{sgd}(z_{k,j})$ can be amplified by the gradient norm of $z_{k,j}$.
\begin{corollary}
    \label{theorem:hif unified}
    Under assumptions of Lemma \ref{lemma:if closed form}, further assume that $T \to \infty$, $\eta_t$ converges to a constant $\eta_c$, $B_{z_i, t} \equiv 1$, $\sum_{i=1}^n \nabla_{\theta} L(z_i, \theta_{0,t_{c}}) = 0$ at $t_{c}$, $\|H_{0,t_c}\|_2 < \frac{1}{\eta_c}$, and $H_{0,t_c}$ is non-singular. Then,
    \begin{equation}
        \begin{aligned}
            \lim_{T \to \infty} I^{sgd,down}_{param}(z_{k,j})
            =  H_{t_c}^{-1} \nabla_{\theta} l(z_{k,j}, \theta_{0,t_c})
        \end{aligned}
    \end{equation}
    If these assumptions do not hold, the errors introduced by the assumptions can be amplified by the gradient norm. For example, if $B_{z_i, t} \not\equiv 1$,
    \begin{equation}
        \begin{aligned}
            & \|\lim_{T \to \infty} I_{param}^{sgd}(z_{k,j}) - I_{param}^{hif}(z_{k,j})\|_2 \\
            \leq & \lim_{T \to \infty} \sum_{t=t_c}^{T-2} \bigg\| \prod_{a=t+1}^{T-1} (Id - \eta_c H_{0,t_c}) \bigg\|_2 (B_{z_k, t} - 1) \eta_c  \\
            & \|\nabla_{\theta} l(z_{k,j}, \theta_{0,t_c})\|_2 \\
            & + (B_{z_k, T-1} - 1)\eta_{c} \|\nabla_\theta l(z_{k,j}, \theta_{0, t_c})\|_2.
        \end{aligned}
    \end{equation}
\end{corollary}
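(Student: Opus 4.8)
The plan is to start from the closed form for $I_{param}^{sgd,down}(z_{k,j})$ supplied by Lemma~\ref{lemma:if closed form} and exploit the stationarity forced by the added hypotheses. First I would argue that once the full-batch gradient vanishes at $t_c$ (which is what $\sum_{i=1}^n \nabla_\theta L(z_i,\theta_{0,t_c})=0$ together with $B_{z_i,t}\equiv 1$ gives), the parameter update at step $t_c$ is zero, so by induction $\theta_{0,t}=\theta_{0,t_c}$ for every $t\geq t_c$. Consequently $H_{0,t}=H_{0,t_c}=:H$, $\nabla_\theta l(z_{k,j},\theta_{0,t})=\nabla_\theta l(z_{k,j},\theta_{0,t_c})=:g$, and $\eta_t=\eta_c$ for all $t\geq t_c$, which freezes every factor appearing in the tail of the closed form.

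Next I would split the sum in \eqref{eq:if param sgd down} at $t_c$. For the indices $t<t_c$ the associated product $\prod_{a=t+1}^{T-1}(Id-\eta_a H_{0,a})$ contains the block $(Id-\eta_c H)^{T-t_c}$; since $H$ is the non-singular positive-definite Hessian at the minimizer with $\|H\|_2<1/\eta_c$, the eigenvalues of $Id-\eta_c H$ lie in $(0,1)$, so $\|Id-\eta_c H\|_2<1$ and this block tends to $0$ as $T\to\infty$. Hence the entire $t<t_c$ contribution, a fixed bounded pre-factor times a vanishing power, disappears in the limit. Reindexing the surviving terms by $m=T-1-t$, they form $\sum_{m=0}^{T-1-t_c}(Id-\eta_c H)^m\,\eta_c g$, and letting $T\to\infty$ this is a Neumann series that sums to $(Id-(Id-\eta_c H))^{-1}\eta_c g=(\eta_c H)^{-1}\eta_c g=H^{-1}g$, i.e. exactly $H_{t_c}^{-1}\nabla_\theta l(z_{k,j},\theta_{0,t_c})=I_{param}^{hif}(z_{k,j})$.

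For the error estimate I would run the same bookkeeping but retain the genuine indicators $B_{z_k,t}$ while still treating $H$, $\eta_c$, and $g$ as their stationary values after $t_c$. Both $\lim_T I_{param}^{sgd}(z_{k,j})$ (with the true $B_{z_k,t}$) and $H^{-1}g$ (the $B\equiv 1$ geometric series) have heads that vanish by the same $(Id-\eta_c H)^{T-t_c}\to 0$ argument, so their difference collapses to $\sum_{t=t_c}^{T-2}\prod_{a=t+1}^{T-1}(Id-\eta_c H)\,\eta_c(B_{z_k,t}-1)g+\eta_c(B_{z_k,T-1}-1)g$. Applying the triangle inequality and submultiplicativity of $\|\cdot\|_2$ then yields the claimed bound, in which the common factor $\|\nabla_\theta l(z_{k,j},\theta_{0,t_c})\|_2$ makes transparent that any deviation from the idealizing assumptions is amplified by the token's gradient norm.

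I expect the main obstacle to be the spectral control of $Id-\eta_c H$: the hypothesis $\|H_{0,t_c}\|_2<1/\eta_c$ only bounds $|\lambda|$, so to conclude $\|Id-\eta_c H\|_2<1$ one genuinely needs the eigenvalues of $H$ to be positive, which I would justify from $H$ being the Hessian at the minimizer reached after convergence. This positivity is precisely what simultaneously guarantees both the decay of the head terms and the convergence of the Neumann series; without it the argument collapses. A secondary, purely cosmetic, point is that the stated bound should carry $|B_{z_k,t}-1|$ rather than $(B_{z_k,t}-1)$, which I would note when passing to norms.
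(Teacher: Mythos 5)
Your proof follows essentially the same route as the paper's: split the closed form from Lemma~\ref{lemma:if closed form} at $t_c$, show the head vanishes because the products contain growing powers of $Id-\eta_c H_{0,t_c}$, resum the surviving tail as a Neumann series equal to $H_{t_c}^{-1}\nabla_\theta l(z_{k,j},\theta_{0,t_c})$, and obtain the mini-batch error bound by the triangle inequality and submultiplicativity. Your two added observations --- that the stationarity $\theta_{0,t}=\theta_{0,t_c}$ for $t\ge t_c$ must first be deduced from the vanishing full-batch gradient, and that the spectral-radius condition $\|Id-\eta_c H_{0,t_c}\|_2<1$ genuinely requires positivity of the eigenvalues of $H_{0,t_c}$ rather than only $\|H_{0,t_c}\|_2<1/\eta_c$ --- are correct refinements of details the paper's proof leaves implicit.
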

Proof of Corollary \ref{theorem:hif unified} can be found in Appendix \ref{sec:proof of corollary 1}.

\begin{corollary}
    \label{theorem:ttif errors}
    Under assumptions of Lemma \ref{lemma:if closed form},
    \begin{equation}
        \label{eq:ttif error bound}
        \begin{aligned}
            &\|I_{param}^{sgd}(z_{k,j}) - I_{param}^{ttif}(z_{k,j})\|_2 \\
            \leq & \sum_{t=0}^{T-2} \bigg\| \left(\prod_{a=t+1}^{T-1}(Id-\eta_a H_{0,a})\right) - Id \bigg\|_2 \eta_t B_{z_k, t} \\
            & \|\nabla_\theta l(z_{k,j}, \theta_{0, t})\|_2.
        \end{aligned}
    \end{equation}
\end{corollary}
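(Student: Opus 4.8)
\section*{Proof proposal for Corollary~\ref{theorem:ttif errors}}

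The plan is to read the bound straight off the closed-form expression for $I_{param}^{sgd}(z_{k,j})$ supplied by Lemma~\ref{lemma:if closed form}, comparing it term-by-term with the training-trajectory estimator $I_{param}^{ttif}$. First I would recall that $I_{param}^{ttif}$ is exactly the TracIn-style approximation obtained from \eqref{eq:if sgd param general} by discarding the Hessian-propagation term $\uppercase\expandafter{\romannumeral2}$. At the token level this is equivalent to replacing each matrix product $\prod_{a=t+1}^{T-1}(Id-\eta_a H_{0,a})$ appearing in \eqref{eq:if param sgd down} by the identity $Id$, which yields
\[
I_{param}^{ttif}(z_{k,j}) = \sum_{t=0}^{T-2} \eta_t B_{z_k,t}\,\nabla_\theta l(z_{k,j},\theta_{0,t}) + \eta_{T-1}B_{z_k,T-1}\,\nabla_\theta l(z_{k,j},\theta_{0,T-1}),
\]
carried with the same sign convention as the down-weighting influence of Lemma~\ref{lemma:if closed form} so that the two estimators are directly comparable.

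Next I would subtract the two expressions. The final ($t=T-1$) term is identical in $I_{param}^{sgd}$ and $I_{param}^{ttif}$ and therefore cancels exactly; equivalently, under the empty-product convention $\prod_{a=T}^{T-1}(\cdot)=Id$, its bracketed coefficient $\bigl(\prod - Id\bigr)$ vanishes. What remains is the sum over $t=0,\dots,T-2$, and collecting coefficients gives
\[
I_{param}^{sgd}(z_{k,j}) - I_{param}^{ttif}(z_{k,j}) = \sum_{t=0}^{T-2} \left[\left(\prod_{a=t+1}^{T-1}(Id-\eta_a H_{0,a})\right) - Id\right]\eta_t B_{z_k,t}\,\nabla_\theta l(z_{k,j},\theta_{0,t}),
\]
so each summand is the deviation operator $\bigl(\prod_{a=t+1}^{T-1}(Id-\eta_a H_{0,a})\bigr)-Id$ acting on the scaled gradient $\eta_t B_{z_k,t}\nabla_\theta l(z_{k,j},\theta_{0,t})$.

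Finally I would take the $\ell_2$ norm, apply the triangle inequality across the sum, and bound each summand with the operator-norm inequality $\|Av\|_2\le\|A\|_2\|v\|_2$, using that $\eta_t B_{z_k,t}\ge 0$ is a nonnegative scalar. This produces \eqref{eq:ttif error bound} verbatim. I do not anticipate a real obstacle: the argument is a one-line cancellation followed by standard norm estimates, with no convergence or limiting issues since $T$ is finite. The only point that deserves explicit attention is confirming that the $t=T-1$ terms coincide (justifying why the error sum stops at $T-2$), which is precisely the statement that TTIF and the SGD influence agree on the most recent step where no Hessian has yet been propagated. This also makes transparent the intended reading of the bound: the error at each step is the product of how far the propagation operator $\prod_{a=t+1}^{T-1}(Id-\eta_a H_{0,a})$ strays from $Id$ and the gradient norm $\|\nabla_\theta l(z_{k,j},\theta_{0,t})\|_2$, so tokens with large gradient norm can amplify the discrepancy.
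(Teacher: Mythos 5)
Your proposal is correct and is exactly the intended argument: the paper states this corollary without printing a proof, and the bound follows verbatim from subtracting the TracIn-style sum from the closed form in Lemma~\ref{lemma:if closed form}, noting the cancellation of the $t=T-1$ terms, and applying the triangle and operator-norm inequalities. Your explicit handling of the sign convention (down-weighting vs.\ the minus sign in \eqref{eq:tracin}) and of the empty-product convention at $t=T-1$ is a sensible bit of care that the paper leaves implicit.
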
 

As demonstrated by Corollary \ref{theorem:hif unified} and \ref{theorem:ttif errors}, the upper bound of errors for HIFs and TTIFs can increase as the gradient norm becomes larger. 
More importantly, we conducted the experiment on MNIST dataset used in \cite{koh_understanding_2017}, which verifies the consistency between $I_{loss}^{hif}$ and LOOR. We also trained a Logistic Regression Model on the same dataset with $batch\_size=50$ and $T=5000$, but changing the optimizer to mini-batch SGD. For 100 training samples, we computed HIF and TTIF as well as performed LOOR. The results on the discrepancies among the approximated $\theta_{\frac{1}{n}, k}^{if}$ with IFs and the accurate $\theta_{\frac{1}{n}, k}^{loor}$ with respect to the gradient norm of the samples are shown in Figure \ref{fig:theory plot}. Note that $\epsilon_k = \frac{1}{n}$ here, which means removing the corresponding sample from the dataset. It is clear that the estimation errors of IFs grow linearly with respect to gradient norms, aligning with the theoretical analysis above.
\begin{figure}
    \centering
    \includegraphics[width=\linewidth]{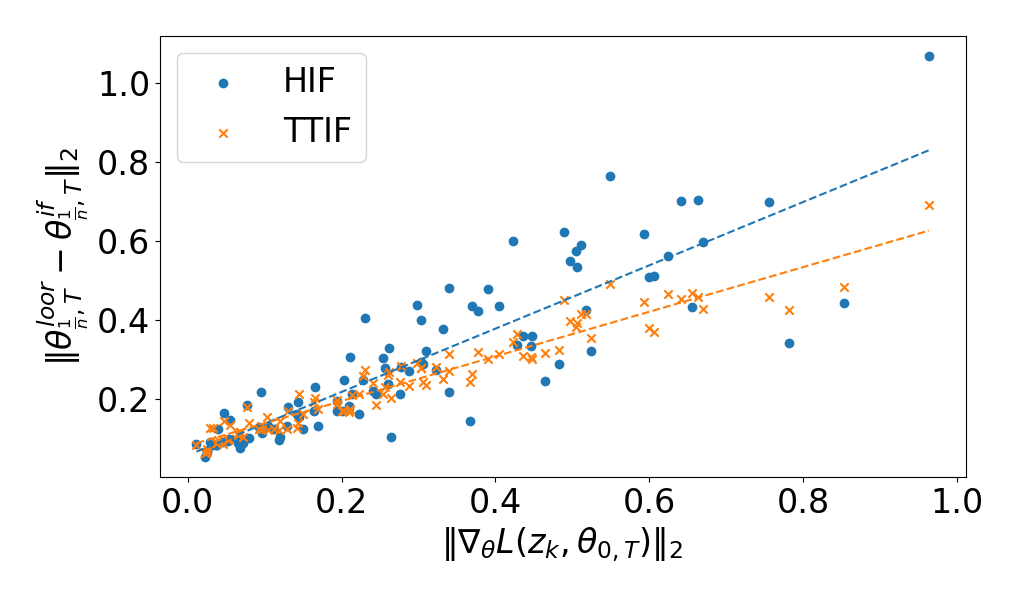}
    \caption{
        Comparison of Parameter Estimation Errors and Gradient Norms.
        We trained a Logistic Regression model using an SGD optimizer on the 10-class MNIST dataset. For HIF, we calculate the perturbed inverse of the Hessian matrix, and batch information is also brought into consideration when using TTIF.
    }
    \label{fig:theory plot}
\end{figure}

In our experiments on privacy tracing, we observed that current IFs always traced back to training samples containing tokens with large gradient norms. However, by performing LOOR, the actual influence on model parameters can be very small, sometimes even zero (examples can be found in Appendix \ref{sec:large norm small influence}). When the influences of tokens with large gradient norms are overestimated, the well-estimated influence of $z_{target}$ can be overshadowed. Therefore, in the next section, we propose Adjusted Influence Functions to reduce the weights of influence scores calculated from large gradient norms.

\subsection{Adjusted Influence Functions}
\label{sec:adjusted if}

Based on \eqref{eq:tracin}, we adjust TTIFs by introducing a function $A$ of the gradient norm, which yields:
\begin{equation}
    \label{eq:attif}
    \begin{split}
        &I_{loss}^{attif}(z_{test}, z_k) := \\
        &\nabla_\theta L(z_{test}, \theta_{0,T}) A(\sum_{j=1}^{m} \sum_{t=0}^{T-1} B_{k,t} \eta_t A(\nabla_\theta l(z_{k,j}, \theta_{0,T}))),
    \end{split}
\end{equation}
where $A(v) = W(\|v\|_2)v$, and $W: R \rightarrow R$ is monotonically decreasing function. $W$ serves an index of estimated influences, assigning larger weights to smaller gradient norms. $A$ is applied twice to further penalize the sample with large norm of gradient. Similarly, based on \eqref{eq:if loss}, HIFs can be adjusted as:
\begin{equation}
    \label{eq:ahif}
    \begin{split}
        &I_{loss}^{ahif}(z_{test}, z_k) := \\
        &\nabla_\theta L(z_{test}, \theta_{0,T}) A(\sum_{j=1}^{m} A(I_{param}^{hif}(z_{k,j}))).
    \end{split}
\end{equation}

To reduce computational cost, by only considering last time step, we propose HAIF to trace the most influential training samples with the following form:
\begin{equation}
    \label{eq:haif}
    \begin{split}
        &I_{loss}^{haif}(z_{test}, z_k) := \\
        &\nabla_\theta L(z_{test}, \theta_{0,T}) A(\sum_{j=1}^{m} A(\nabla_\theta l(z_{k,j}, \theta_{0,T}))).
    \end{split}
\end{equation}
In our experiments, HAIF utilizes $A(v) = \frac{v}{\|v\|_2}$. HAIF-T is defined as $I_{loss}^{haif-t}(z_{test}, z_k) := \nabla_\theta L(z_{test}, \theta_{0,T}) \sum_{j=1}^{m} A(\nabla_\theta l(z_{k,j}, \theta_{0,T}))$ which only adjusts gradient token-wise serving as an ablation study.
In comparison with $\theta$-RelatIF \eqref{eq:relatif}, which performs sample-wise gradient normalization, HAIF adjusts the weight of each token based on its gradient norm. If tokens with large gradient norms are not adjusted, their gradients may dominate the overall sentence gradient, thereby affecting the tracing performance.

We detail the algorithm for implementing HAIF in Appendix \ref{sec:hiaf algorithm}, which is more computationally efficient than current IFs and can accurately trace the most influential samples.

\section{Experiments}
\label{sec:exp}
This section mainly endeavors to address the following questions:
\begin{enumerate}
    \item How well can LMs memorize and extract privacy information, and subsequently utilize the memorized privacy information for reasoning?
    \item Under LM settings, can IFs accurately trace back to the $z_{target}$ according to the privacy information leaked from LMs? Moreover, does HAIF outperforms all SOTA IFs in terms of tracing accuracy.
    \item If LMs employ memorized knowledge for reasoning (specifically, when the predicted contents do not encompass any text identical to the corresponding pre-training data), can HAIF maintains higher tracing accuracy than SOTA IFs?
\end{enumerate}

\begin{table*}[hb!t]
    \centering
    \captionsetup{labelformat=empty}
    \caption{Table 2: Tracing accuracy of GPT2 using PII-E dataset. Since GPT2-102M has limited number of correct predictions, its tracing accuracy may lack of statistical significance. NAs are caused by OOM. The numbers in the table are represented in percentage form calculated by \eqref{eq:tracing accuracy}. }
    \label{tab:tracing accuracy simple gpt2}
    \resizebox{\textwidth}{!}{%
    \begin{tabular}{l|lllll|lllll|lllll|lllll}
    \toprule
    Num   Params                                            & \multicolumn{5}{c|}{102M}        & \multicolumn{5}{c|}{325M}                                                          & \multicolumn{5}{c|}{737M}                                                           & \multicolumn{5}{c}{1510M}                                                         \\ \hline
    Tracing Types                                            & DOB           & Email         & Phone      & Addr & Avg       & DOB            & Email          & Phone      & Addr           & Avg        & DOB            & Email          & Phone       & Addr           & Avg        & DOB            & Email          & Phone      & Addr           & Avg        \\ \hline \hline
    LiSSA                                                   & 100.00          & 0.00          & 0.00          & NA   & 28.57         & 0.00           & 0.00           & 0.00          & 0.00           & 0.00           & NA             & NA             & NA             & NA             & NA             & NA             & NA             & NA            & NA             & NA             \\
    EK-FAC                                                  & 100.00          & 33.00          & 50.00          & NA   & 57.14         & 16.30          & 0.00           & 33.33         & 3.03           & 13.41          & 12.03          & 0.00           & 0.00           & 7.58           & 9.56           & 1.00           & 6.90           & 0.00          & 10.77          & 9.26           \\
    RelatIF & 100.00 & 33.33 & 50.00 & NA & 57.14 & 28.15 & 0.00 & 33.33 & 9.10 & 23.46 & 20.89 & 4.55 & 0.00 & 13.64 & 17.13 & 25.63 & 17.24 & 12.50 & 16.92 & 21.85\\ 
    GradientProduct                                         & 100.00          & 0.00          & 0.00          & NA   & 28.57         & 0.00           & 0.00           & 0.00          & 0.00           & 0.00           & 0.18           & 4.55           & 0.00           & 3.03           & 2.39           & 7.50           & 3.45           & 0.00          & 3.08           & 5.56           \\
    TracInCp                                             & 100.00          & 0.00          & 0.00          & NA   & 28.57         & 0.74           & 0.00           & 0.00          & 51.52          & 10.06          & 0.18           & 4.55           & 0.00              & 3.03           & 2.39           & NA             & NA             & NA            & NA             & NA             \\
    GradientCosine                                     & 100.00          & 0.00          & 0.00          & NA   & 28.57         & 8.15           & 0.00           & 0.00          & 3.03           & 6.70           & 12.03          & 9.09           & 0.00           & 7.58           & 10.36          & 15.00          & 10.34          & 6.25          & 7.69           & 12.22          \\ \hline
    {\color[HTML]{C00000} \textbf{HAIF-T{[}Ours{]}}}     & 100.00          & 100.00          & 100.00          & NA   & 100.00          & 65.19          & 75.00          & 1.00          & 69.70          & 67.04          & 87.34          & 86.36          & 80.00          & 78.79          & 84.86          & 96.25          & 89.66          & 100.00          & 89.23          & 94.07          \\
    {\color[HTML]{C00000} \textbf{HAIF{[}Ours{]}}}         & \textbf{100.00} & \textbf{100.00} & \textbf{100.00} & NA   & \textbf{100.00} & \textbf{67.41} & \textbf{75.00} & \textbf{100.00} & \textbf{81.82} & \textbf{70.95} & \textbf{89.87} & \textbf{86.36} & \textbf{80.00} & \textbf{83.33} & \textbf{87.65} & \textbf{98.13} & \textbf{89.66} & \textbf{100.00} & \textbf{90.77} & \textbf{95.56} \\
    \bottomrule
    \end{tabular}%
    }
\end{table*}

\begin{table}[htbp]
    \centering
    \captionsetup{labelformat=empty}
    \caption{Table 3: Tracing accuracy of QWen1.5 using PII-E dataset.}
    \label{tab:tracing accuracy simple qwen}
    \resizebox{\linewidth}{!}{%
    \begin{tabular}{l|lllll|lllll}
        \toprule
        Num Params                                          & \multicolumn{5}{c|}{464M}                                                           & \multicolumn{5}{c}{1840M}                                                                                                  \\ \hline
        Tracing Types                                       & DOB            & Email          & Phone          & Addr           & Avg            & DOB                    & Email                  & Phone                  & Addr                   & Avg                    \\ \hline \hline
        LiSSA                                               & NA             & NA             & NA             & NA             & NA             & NA                     & NA                     & NA                     & NA                     & NA                     \\
        EK-FAC                                              & 37.79          & 51.79          & 44.87          & 50.43          & 45.28          & 29.63                  & 22.68                  & 23.42                  & 25.43                  & 25.89                  \\
        RelatIF & 42.44 & 54.46 & 44.87 & 57.39 & 49.27& NA & NA & NA & NA & NA \\
        GradientProduct                                     & 0.00           & 0.00           & 0.00           & 0.00           & 0.00           & 1.06                   & 0.00                   & 3.60                   & 1.69                   & 1.50                   \\
        TracInCp                                         & 0.00           & 0.00           & 0.00           & 0.00           & 0.00           & NA                     & NA                     & NA                     & NA                     & NA                     \\
        GradientCosine                                 & 1.74           & 1.78           & 1.28           & 5.22           & 2.51           & 1.59                   & 1.74                   & 5.41                   & 3.39                   & 2.81                   \\ \hline
        {\color[HTML]{C00000} \textbf{HAIF-T{[}Ours{]}}} & 66.28          & \textbf{64.29} & 71.79          & 66.96          & 66.88          & 59.26                  & 73.91                  & 66.67                  & 77.97                  & 68.10                  \\
        {\color[HTML]{C00000} \textbf{HAIF{[}Ours{]}}}     & \textbf{69.76}          & 63.39          & \textbf{74.36} & \textbf{74.78} & \textbf{70.23} & \textbf{75.66}         & \textbf{83.48}         & \textbf{72.07}         & \textbf{85.59}         & \textbf{78.80}         \\
        \bottomrule
        \end{tabular}%
    }
\end{table}

\subsection{Datasets}
\label{sec:datasets}

In order to answer the previous questions, we begin with synthesizing two datasets to have groundtruth for tracing: PII Extraction (PII-E) dataset and PII Complex Reasoning (PII-CR) Dataset. Dataset examples can be found in Appendix \ref{sec:dataset example}.
Please note that, all PIIs (e.g. name, date of birth and phone number) in this work are synthesized with Faker \footnote{https://github.com/joke2k/faker}. No real individual privacy is compromised. 

\subsubsection{PII-E}

To emulate the LLM training process, we adopt a proxy similar to \cite{allen-zhu_physics_2023} for PII-E synthesis. PII-E includes pretraining and instruction data. We generate virtual data subjects with unique names and four attributes: DOB, Email, Phone, and Address. QWen1.5-14B \cite{bai_qwen_2023} transforms this into biographies as pretraining data.
We create instruction data using the template: \textit{Question: What's the \{Attribute\} of \{Name\}? Answer: \{Attribute Value\}}. Training data combines pretraining data with 60\% of instruction data. The remaining 40\% is for testing.
Model predictions for test questions are ${z_{test}}$s. Pretraining data are ${z_i}$s for tracing. Each $z_{test}$ has a corresponding $z_{target}$ from the same virtual subject's pretraining data.
PII-E validates the PII extraction ability of LMs and the tracing ability of IFs, especially when responses include pretraining text.

\subsubsection{PII-CR}

PII-CR challenges models to perform one-step reasoning based on memorized information. It consists of pretraining and instruction data. We create mappings between festivals and dates, and landmarks and provinces. Pretraining data are generated using the template: \textit{\{Name\} is born on \{Festival Name\}. Adjacent to \{Name\}'s place is \{Landmark Building\}.}
Instruction data asks for DOB (festival date) and Address (landmark province).
PII-CR tests the PII reasoning ability of LMs and the tracing ability of IFs, when responses do not include any same text from pretraining text.

\subsection{Privacy Learning Abilities of LM}

We use classical GPT2 series and QWen1.5 series (one of the SOTA open-source LLMs \cite{huang_c-eval_2023}), and use PII-E and PII-CR datasets to fine-tune models with various series and scales. 
Generally, increased parameters and advanced architectures enhance PII extraction, with all PII types except DOB posing extraction challenges due to their length and uniqueness variations.
A similar pattern is observed in models trained on PII-CR dataset, but the need for reasoning alongside memorization reduces prediction accuracy compared to the PII-E dataset. More details can be found in Appendix \ref{sec:privacy learning}.

\subsection{Dataset Validation}
\label{sec:loo}

Let us revisit a key statement from Sections \ref{sec:problem definition} and \ref{sec:datasets}, which considers the groundtruth $z_{target}$ for a prediction $z_{test}$ is the corresponding pretraining data from the same virtual data subject.

Previous researchers \cite{koh_understanding_2017} regard LOOR as the gold standard . However, due to its sensitivity to training hyperparameters \cite{k_revisiting_2021}, detection of poisoned training data is argued to be a better metric. In our context, considering the uniqueness of PIIs, pretraining data leading to PII leakage aligns with the definition of poisoned data.

We perform LOOR on PII-E and PII-CR and training details can be found in Appendix \ref{sec:loor details}.
Table 1
shows the agreement ratio (which equivalent to tracing accuracy of LOOR) of the expected target and LOOR. 

\begin{table}[ht!]
    \centering
    \captionsetup{labelformat=empty}
    \caption{Table 1: Agreement Ratio of the Expected Target and LOOR for PII-E and PII-CR Datasets}
    \label{tab:loo}
    \resizebox{0.6\linewidth}{!}{%
    \begin{tabular}{l|ll}
    \toprule
                         & PII-E  & PII-CR \\ \hline
    Correct Prediction   & 100.00 & 13.33   \\
    Incorrect Prediction & 98.00  & 13.85   \\ \bottomrule
    \end{tabular}%
    }
\end{table}

For PII-E dataset, the agreement ratio reaches 100\%. Despite the dispute over the gold standard and long tail issue, IFs should identify the most influential pretraining sample for all $z_{test}$s in PII-E dataset.
However, the agreement ratio on PII-CR dataset is low. We believe this is because predicting the correct PII requires the model to remember the knowledge in the corresponding pre-training data and reason based on this knowledge, which is influenced by other pre-training data. Both factors affect the loss of $z_{test}$.

\begin{table*}[htbp]
    \centering
    \captionsetup{labelformat=empty}
    \caption{Table 4: Tracing accuracy of QWen1.5 and GPT2 using PII-CR dataset}
    \label{tab:tracing accuracy complex}
    \resizebox{\textwidth}{!}{%
    \begin{tabular}{l|lll|lll|lll|lll|lll|lll}
        \toprule
        Model   Series                                      & \multicolumn{6}{c|}{QWen1.5}                                                                         & \multicolumn{12}{c}{GPT2}                                                                                                                                                                                 \\ \hline
        Num Params                                          & \multicolumn{3}{c|}{464M}                         & \multicolumn{3}{c|}{1840M}                        & \multicolumn{3}{c|}{102M}                         & \multicolumn{3}{c|}{325M}                         & \multicolumn{3}{c|}{737M}                         & \multicolumn{3}{c}{1510M}                        \\ \hline
        Tracing Types                                       & DOB            & Addr           & Avg            & DOB            & Addr           & Avg            & DOB            & Addr           & Avg            & DOB            & Addr           & Avg            & DOB            & Addr           & Avg            & DOB            & Addr           & Avg            \\ \hline \hline
        LiSSA                                               & NA             & NA             & NA             & NA             & NA             & NA             & 4.17           & 10.00          & 5.88           & 2.63           & 0.00           & 2.62           & NA             & NA             & NA             & NA             & NA             & NA             \\
        EK-FAC                                              & 13.18          & 14.71          & 13.71          & \textbf{11.11} & 9.09           & 10.04          & 8.33           & 0.00           & 5.88           & 5.26           & 0.00           & 4.12           & 5.06           & 5.26           & 5.10           & 16.98          & 11.76          & 15.71          \\
        RelatIF & 21.71 & 33.82 & 25.88 & NA & NA & NA & 16.67 & 20.00 & 17.65 & 10.53 & 4.76 & 9.28 & 17.72 & 21.05 & 18.36 & 16.98 & 17.65 & 17.14 \\
        GradientProduct                                     & 0.00           & 0.00           & 0.00           & 0.00           & 0.00           & 0.00           & 4.17           & 10.00          & 5.88           & 2.63           & 0.00           & 2.06           & 2.53           & 0.00           & 2.04           & 11.32          & 17.65          & 12.86          \\
        TracInCp                                         & 0.00           & 0.00           & 0.00           & 0.00           & 0.00           & 0.00           & 4.17           & 10.00          & 5.88           & 2.63           & 0.00           & 2.06           & 2.53           & 5.26           & 3.06           & 11.32          & 17.65          & 12.86          \\
        GradientCosine                                 & 2.33           & 7.35           & 4.06           & 2.56           & 2.27           & 2.41           & 8.33           & 20.00          & 11.76          & 6.58           & 4.76           & 6.19           & 8.86           & 21.05          & 11.22          & 1.89           & 17.65          & 5.71           \\ \hline
        {\color[HTML]{C00000} \textbf{HAIF-T{[}Ours{]}}} & 31.78          & 55.88          & 40.10          & 2.56           & 18.94          & 11.24          & 16.67          & 20.00          & 17.65          & 50.00          & 33.33          & 46.39          & 58.23          & 78.95          & 62.24          & 47.17          & 47.06          & 47.14          \\
        {\color[HTML]{C00000} \textbf{HAIF{[}Ours{]}}}     & \textbf{34.88} & \textbf{55.88} & \textbf{42.13} & 2.56           & \textbf{22.73}          & \textbf{13.25} & \textbf{16.67} & \textbf{20.00}          & \textbf{17.65}          & \textbf{51.32} & \textbf{33.33} & \textbf{47.42} & \textbf{60.76} & \textbf{78.95} & \textbf{64.29} & \textbf{50.94}          & \textbf{47.06}          & \textbf{50.00}          \\
        \bottomrule
        \end{tabular}%
    }
\end{table*}

\begin{figure*}[htbp]
    \centering
    \begin{subfigure}{0.33\textwidth}
        \centering
        \includegraphics[width=\linewidth]{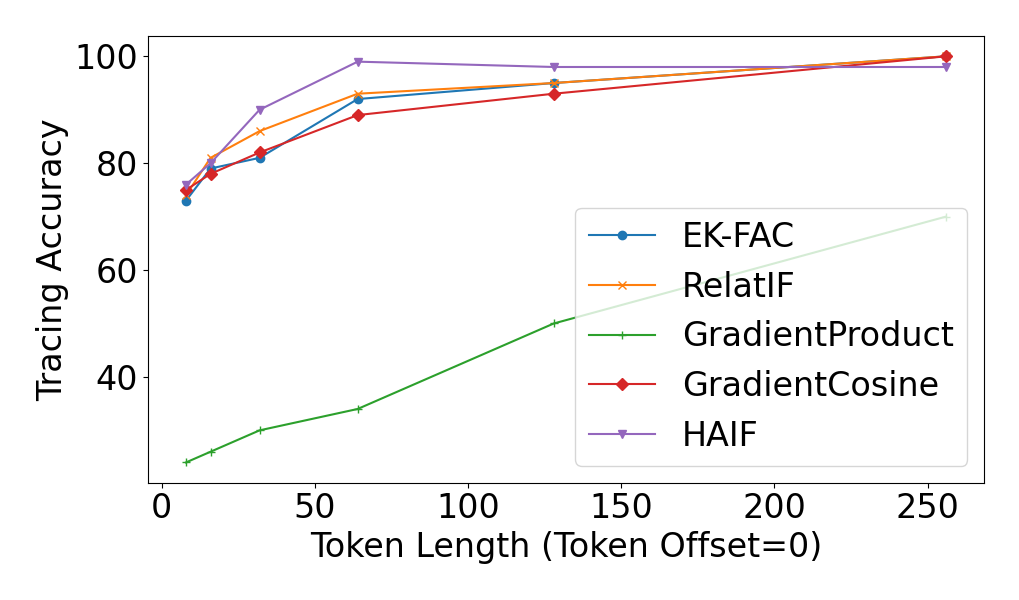}
        \caption{Tracing accuracy vs token length without token offset.}
        \label{fig:length offset=0}
    \end{subfigure}
    \hfill
    \begin{subfigure}{0.33\textwidth}
        \centering
        \includegraphics[width=\linewidth]{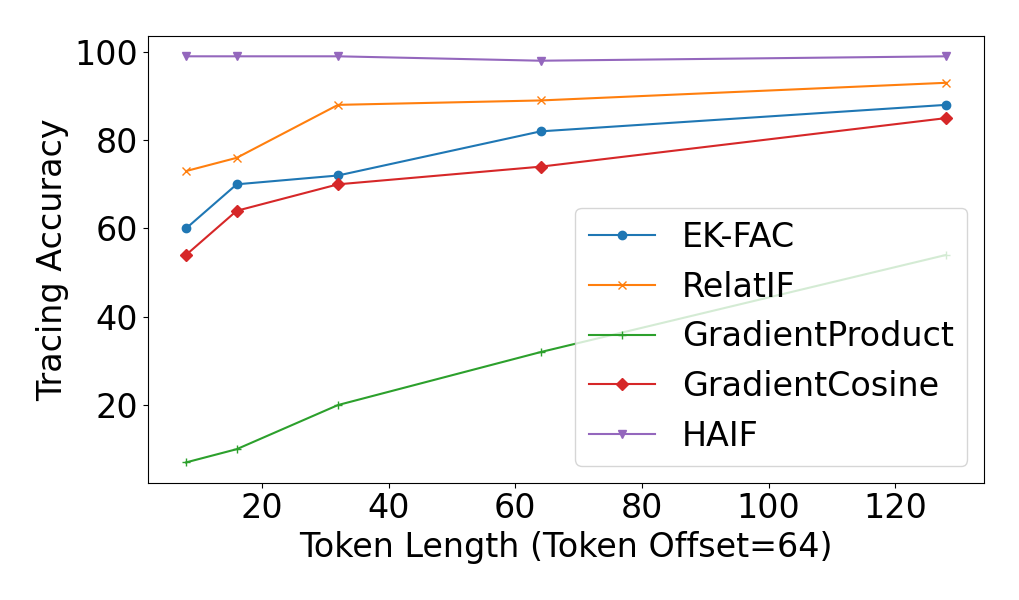}
        \caption{Tracing accuracy vs token length with fixed token offset 64.}
        \label{fig:length offset=64}
    \end{subfigure}
    \hfill
    \begin{subfigure}{0.33\textwidth}
        \centering
        \includegraphics[width=\linewidth]{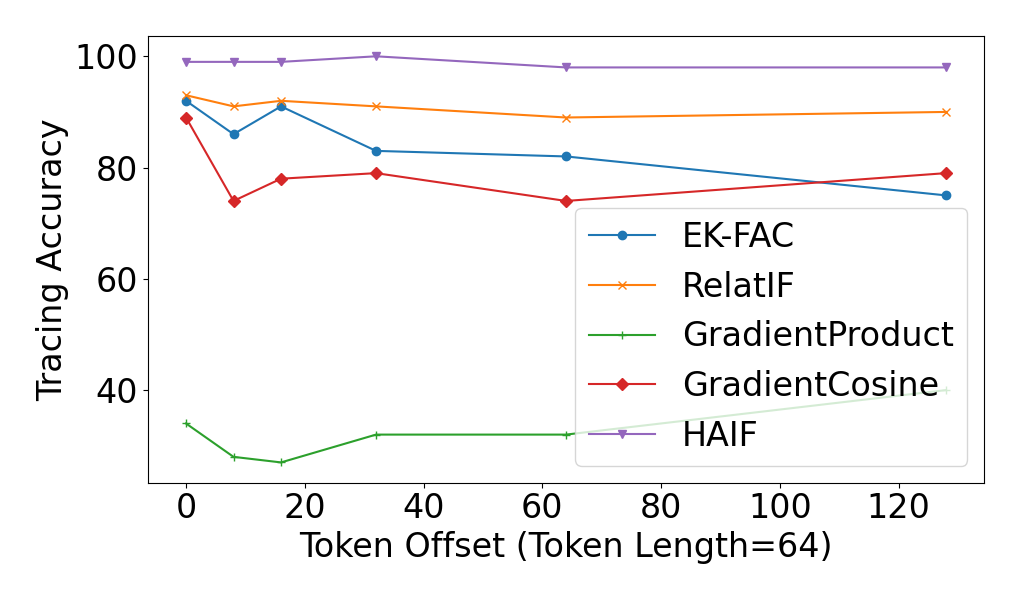}
        \caption{Tracing accuracy vs token offset with fixed token length 64.}
        \label{fig:offset length=64}
    \end{subfigure}
    \caption{Tracing accuracy on CLUECorpus2020 dataset with different token offsets and lengths.}
\end{figure*}

\subsection{Privacy Tracing Accuracy}

We compare the tracing accuracy of five SOTA influence functions (IFs) on various model series, scales, datasets, and PII types. We select five SOTA IFs discussed in Section \ref{sec:related work} as baselines, i.e. EK-FAC \cite{grosse_studying_2023}, LiSSA \cite{bae_if_2022,koh_understanding_2017}, RelatIF \cite{barshan_relatif_2020}, TracInCP \cite{pruthi_estimating_2020}, GradientProduct and GradientCosine. The detailed configurations of IFs are listed in Appendix \ref{sec:if configs}.

\subsubsection{PII Extraction Tracing Abilities}
\label{sec:pii-e tracing}
In our experiments, incorrectly predicted privacy information is not regarded as a threat; therefore, we only trace the correct predictions of each model. Please note that the maximum tracing accuracy on PII-E is 100\% regardless of the choice of gold standard.

Table 2
shows the tracing accuracy of IFs on various GPT2 scales on PII-E dataset. HAIF significantly improves tracing accuracy across all GPT2 scales and PII types. Comparing to the best baseline, HAIF-T enhances average tracing accuracy from 43.58\% to 73.71\%. Tracing accuracy of HAIF-T increases as models scale up, achieving 94.07\% on GPT2-1.5B. This performance is consistent across PII types. Additionally, HAIF boosts tracing accuracy for all model scales by 1.49\% to 3.91\%, reaching 95.56\% on GPT2-1.5B. Table 3
presents tracing accuracies for IFs on QWen1.5 models on PII-E dataset. HAIF still surpasses the best baseline, achieving tracing accuracies of 70.23\% and 78.80\% on QWen series models.

\subsubsection{PII Reasoning Tracing Accuracy}

While PII-E is a good benchmark for IFs, its outputs contain exact pretraining data content, enabling good results via text search. Hence, we use PII-CR dataset to test IFs when LM outputs result from reasoning, not identical pretraining data.

The comparison results are reported in Table 4
showing the tracing accuracies of IFs on various GPT2 and QWen1.5 scales on PII-CR dataset. Note that maximum tracing accuracy is not 100\% due to 5.5\% PII prediction accuracy from random guessing on PII-CR dataset. For example, tracing accuracy of QWen1.5-0.5B on PII-CR should be less than 88.83\%. Lower PII predictions thus yield lower maximum tracing accuracies.

Generally, increased task difficulty leads to decreased prediction and tracing accuracies compared to PII-E dataset. Despite LM outputs lacking identical pretraining data, HAIF maintains highest tracing accuracy across all models. For GPT2 series, HAIF significantly improves tracing accuracy, ranging from 32.86\% to 45.93\%, over the best baseline. For QWen1.5 series, improvements range from 3.21\% to 16.25\%.

\subsection{Tracing Accuracy on CLUECorpus2020}

To further validates the effectiveness of the proposed HAIF, we conduct experiments on the real-world pretraining dataset, CLUECorpus2020 \cite{xu_cluecorpus2020_2020}
The text is tokenized and concatenated to form the pretraining dataset, where $n=100$ and $m=256$. The test set mirrors the training set, with the addition of two hyperparameters, offset and length, to control the test loss, defined as $L(z_{test_i}, \theta) = \sum_{j=offset}^{offset+length} l(z_{i,j}, \theta)$. Here, the offset simulates the user prompt, and the length denotes the model response length. Through LOOR experiments, each $z_i$ is the most influential training sample for $z_{test_i}$.
According to previous experiments, the best baseline occurs while using QWen1.5-0.5B; thus it is employed to test various IFs under different lengths and offsets.

Fig. \ref{fig:length offset=0} illustrates that without introducing token offset, RelatIF, EK-FAC, GradientCosine, and HAIF exhibit comparable tracing accuracy as token length varies. This can be attributed to the fact that large gradient norms are mainly found in the initial tokens. Therefore, when the $offset=0$, $z_{test_i}$ and $z_i$ share identical gradients with large norms, enabling RelatIF, EK-FAC and GradientCosine to maintain high tracing accuracies.
However, as depicted in Fig. \ref{fig:length offset=64}, the introduction of an offset (which simulates user prompts) leads to a decrease in tracing accuracy for all IFs, with the exception of HAIF. The robustness of HAIF is further demonstrated in Fig. \ref{fig:offset length=64}, where HAIF maintains its performance trend, unaffected by the increase in token offset.
It is noteworthy that the tracing accuracy improves with token length, regardless of token offsets. Considering that user prompts are typically not empty and privacy information typically comprises a limited number of tokens, HAIF outperforms all SOTAs in such scenarios. 

\subsection{Conclusions}

This work implements IFs to trace back to the most influential pre-training data to address the concern in privacy leakage in LMs. 
Our analysis reveals that tokens with large gradient norms can increase errors in LOOR parameter estimation and amplify the discrepancy between existing IFs and actual influence.
Therefore, we propose a novel IF, HAIF, which reduces the weights of token with large gradient norms, providing high tracing accuracy with less computational cost than other IFs. To better validate the proposed methods, we construct two datasets, PII-E and PII-CR, where the groundtruth of privacy leakage can be located. Experiments on different model series and scales demonstrate that HAIFs significantly improve tracing accuracy in comparison with all SOTA IFs.
Moreover, on real-world pretraining data CLUECorpus2020, HAIF consistently outperforms all SOTA IFs exhibiting strong robustness across varying prompt and response lengths.

\bibliography{anonymous-submission-latex-2025.bbl}

\clearpage 
\appendix

\section{Appendix}
\subsection{Proof of Lemma \ref{lemma:if closed form}}
\label{sec:proof of lemma 1}

\begin{lemma*}
    Assume that model is trained with mini-batch SGD, where $l$ is $C^2(\theta) \cap C(\epsilon_{k,j})$, and parameters satisfy 

    \begin{equation}
        \begin{split}
            \theta_{\epsilon_{k,j}, t}^{down} - \theta_{\epsilon_{k,j}, t-1}^{down} = & - \eta_{t-1} \sum_{i=1}^{n} B_{z_i,t-1} \nabla_\theta L(z_i, \theta_{\epsilon_{k,j}, t-1}^{down}) \\
            & + \eta_{t-1} \epsilon_{k,j} B_{z_k,t-1} \nabla_\theta l(z_{k,j}, \theta_{\epsilon_{k,j}, t-1}^{down}).
        \end{split}
    \end{equation}

    The influence of down-weighting $z_{kj}$ on parameters is given by:
    \begin{equation}
        \begin{split}
            \label{eq:if sgd down appendix}
            & I_{param}^{sgd,down}(z_{k,j}) = \\
            & \sum_{t=0}^{T-2} \left( \prod_{a = t+1}^{T-1} (Id - \eta_a H_{0, a}) \right) \eta_t B_{z_k, t} \nabla_\theta l(z_{k,j}, \theta_{0, t})\\
            & + \eta_{T-1} B_{z_k, T-1} \nabla_\theta l(z_{k,j}, \theta_{0, T-1}) \quad \forall T \geq 2,
        \end{split}
    \end{equation}
    where $H_{0,t}=\sum_{i=1}^n B_{z_i,t} \frac{\partial^2 L}{\partial \theta^2}$ and $\theta_{0,t}$ are the Hessian matrix and model parameters at $t$ step without altering the weight of $z_{k,j}$ respectively.
\end{lemma*}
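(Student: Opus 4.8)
The plan is to prove the closed form by induction on $T$, after first reducing the update rule to a one-step derivative recurrence. Write $\Delta_t$ for $\frac{d\theta_{\epsilon_{k,j},t}^{down}}{d\epsilon_{k,j}}\big|_{\epsilon_{k,j}=0}$, so that $I_{param}^{sgd,down}(z_{k,j}) = \Delta_T$ (the subtracted $\theta_{0,T}$ is constant in $\epsilon_{k,j}$ and drops out of the derivative). Before differentiating, I would establish that each $\theta_{\epsilon_{k,j},t}^{down}$ is $C^1$ in $\epsilon_{k,j}$, so that $\Delta_t$ is well defined: this follows by induction on $t$ from $\theta_{\epsilon_{k,j},0}^{down}=\theta_{init}$ being constant, since the update expresses $\theta_{\epsilon_{k,j},t}^{down}$ as a sum of $C^1$ functions of $\epsilon_{k,j}$. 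Here the hypothesis $l\in C^2(\theta)$ guarantees $\nabla_\theta L$ and $\nabla_\theta l$ are $C^1$ in $\theta$, while the only explicit $\epsilon_{k,j}$-dependence is the linear factor in the last term.

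Next I would differentiate the update equation termwise with respect to $\epsilon_{k,j}$ and evaluate at $\epsilon_{k,j}=0$. The chain rule applied to the middle term produces the Hessian factor $\sum_i B_{z_i,t-1}\frac{\partial^2 L}{\partial\theta^2}=H_{0,t-1}$ acting on $\Delta_{t-1}$. The key simplification is that the final term carries an explicit factor $\epsilon_{k,j}$, so upon evaluating at $\epsilon_{k,j}=0$ its $\theta$-Hessian contribution vanishes and only the undifferentiated part survives. This yields the affine recurrence
\begin{equation*}
\Delta_t = (Id - \eta_{t-1}H_{0,t-1})\Delta_{t-1} + \eta_{t-1}B_{z_k,t-1}\nabla_\theta l(z_{k,j},\theta_{0,t-1}),
\end{equation*}
with initial condition $\Delta_0 = 0$ since $\theta_{init}$ does not depend on $\epsilon_{k,j}$.

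Finally I would solve this linear recurrence by induction on $T$. The base case $T=2$ checks directly, with $\Delta_1 = \eta_0 B_{z_k,0}\nabla_\theta l(z_{k,j},\theta_{0,0})$, and one step of the recurrence reproducing the claimed formula once the empty product $\prod_{a=T}^{T-1}(\cdot)=Id$ is read as the isolated last term. For the inductive step, substituting the formula for $\Delta_{T-1}$ into the recurrence, the propagator $(Id-\eta_{T-1}H_{0,T-1})$ extends every partial product $\prod_{a=t+1}^{T-2}$ to $\prod_{a=t+1}^{T-1}$ and absorbs the previously isolated $t=T-2$ term back into the sum, while the new additive term becomes the isolated $t=T-1$ term. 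The up-weighting identity $I_{param}^{sgd,up}(z_{k,j})=-I_{param}^{sgd,down}(z_{k,j})$ then follows because up-weighting negates the source term $\eta_{t-1}B_{z_k,t-1}\nabla_\theta l$ in the recurrence while leaving the homogeneous propagators $(Id-\eta_a H_{0,a})$ unchanged, so the entire solution flips sign.

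The main obstacle I anticipate is bookkeeping rather than analysis: getting the empty-product and index conventions exactly right so that the isolated $t=T-1$ term and the telescoping of the propagators $(Id-\eta_a H_{0,a})$ align cleanly across the inductive step. The only genuinely analytic point, namely interchanging differentiation with the per-step recursion, is discharged by the preliminary $C^1$ induction above, so for a fixed finite $T$ no uniform-convergence or dominated-convergence argument is required.
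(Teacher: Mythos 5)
Your proposal is correct, and it reaches the closed form by a cleaner intermediate object than the paper does. The paper first telescopes the per-step updates into a single expression for $\theta_{\epsilon_{k,j},T}^{down}$ in terms of $\theta_{init}$ and a sum over all $T$ steps, differentiates that whole expression to obtain a relation expressing $I_{param}^{sgd,down}$ through \emph{all} of the earlier derivatives $\frac{d\theta_{\epsilon_{k,j},t}^{down}}{d\epsilon_{k,j}}\big|_{\epsilon_{k,j}=0}$ for $t<T$, and then carries out an induction whose step requires expanding every one of those terms — hence the long displayed computation in the paper's inductive step. You instead differentiate the single-step update directly and obtain the first-order affine recurrence $\Delta_t=(Id-\eta_{t-1}H_{0,t-1})\Delta_{t-1}+\eta_{t-1}B_{z_k,t-1}\nabla_\theta l(z_{k,j},\theta_{0,t-1})$ with $\Delta_0=0$, after which the closed form is the standard variation-of-constants solution and the induction step is a one-line substitution. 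The two routes are algebraically equivalent (subtracting consecutive instances of the paper's relation recovers your recurrence), but yours localizes the bookkeeping, makes the role of the propagators $(Id-\eta_a H_{0,a})$ transparent, and gives the up-weighting sign flip for free from the linearity of the recurrence in the source term. Your preliminary $C^1$-in-$\epsilon_{k,j}$ induction is also a point of rigor the paper passes over silently; it is exactly what justifies differentiating through the recursion. The only detail to watch is the index convention you already flagged: the isolated $t=T-1$ term in the statement is precisely the $t=T-1$ summand with the empty product $\prod_{a=T}^{T-1}(\cdot)$ read as $Id$, and once that convention is fixed your telescoping goes through without incident.
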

\begin{proof}
    \begin{subequations}
        \label{eq:param update expansion}
        \begin{equation}
            \begin{aligned}
                \theta_{\epsilon_{k,j}, 1}^{down} - \theta_{init} = & - \eta_{0} \sum_{i=1}^{n} B_{z_i,0} \nabla_\theta L(z_i, \theta_{\epsilon_{k,j}, 0}^{down}) \\
                & + \eta_{0} \epsilon_{k,j} B_{z_k,0} \nabla_\theta l(z_{k,j}, \theta_{\epsilon_{k,j}, 0}^{down}),
            \end{aligned}
        \end{equation}
        \begin{equation}
            \begin{aligned}
                \theta_{\epsilon_{k,j}, 2}^{down} - \theta_{\epsilon_{k,j}, 1}^{down} = & - \eta_{1} \sum_{i=1}^{n} B_{z_i,1} \nabla_\theta L(z_i, \theta_{\epsilon_{k,j}, 1}^{down}) \\
                & + \eta_{1} \epsilon_{k,j} B_{z_k,1} \nabla_\theta l(z_{k,j}, \theta_{\epsilon_{k,j}, 1}^{down}),
            \end{aligned}
        \end{equation}
        \begin{equation*}
            \vdots
        \end{equation*}
        \begin{equation}
            \begin{aligned}
                & \theta_{\epsilon_{k,j}, T}^{down} - \theta_{\epsilon_{k,j}, T-1}^{down} = \\
                & - \eta_{T-1} \sum_{i=1}^{n} B_{z_i,T-1} \nabla_\theta L(z_i, \theta_{\epsilon_{k,j}, T-1}^{down}) \\
                & + \eta_{T-1} \epsilon_{k,j} B_{z_k,T-1} \nabla_\theta l(z_{k,j}, \theta_{\epsilon_{k,j}, T-1}^{down}).
            \end{aligned}
        \end{equation}
    \end{subequations}
    
    By summing over equations in \eqref{eq:param update expansion}, the parameters at $T$ time step is:
    \begin{equation}
        \begin{split}
            \theta_{\epsilon_{k,j}, T}^{down} = & \theta_{init}- \sum_{t=0}^{T-1} \eta_t \sum_{i=1}^n B_{z_i,t} \nabla_\theta L(z_i,\theta_{\epsilon_{k,j},t}^{down}) \\
            & + \sum_{t=0}^{T-1} \eta_t \epsilon_{k,j} B_{z_k,t} \nabla_\theta l(z_{k,j}, \theta_{\epsilon_{k,j},t}^{down}).
        \end{split}
    \end{equation}
    By taking derivative of $\theta_{\epsilon_{k,j}, T}^{down}$ with respect to $\epsilon_{k,j}$,
    \begin{equation}
        \begin{split}
            &\frac{d \theta_{\epsilon_{k,j}, T}^{down}}{d \epsilon_{k,j}} \\= & - \sum_{t=0}^{T-1} \eta_t \sum_{i=1}^n B_{z_i,t} \frac{d(\nabla_\theta L(z_i,\theta_{\epsilon_{k,j},t}^{down}))}{d \theta_{\epsilon_{k,j},t}^{down}}\frac{\theta_{\epsilon_{k,j},t}^{down}}{d \epsilon_{k,j}} \\
            & + \sum_{t=0}^{T-1} \eta_t B_{z_k,t} \nabla_\theta l(z_{k,j}, \theta_{\epsilon_{k,j},t}^{down}) \\
            & + \sum_{t=0}^{T-1} \eta_t \epsilon_{k,j} B_{z_k,t} \frac{d(\nabla_\theta l(z_{k,j}, \theta_{\epsilon_{k,j},t}^{down}))}{d \theta_{\epsilon_{k,j},t}^{down}}\frac{\theta_{\epsilon_{k,j},t}^{down}}{d \epsilon_{k,j}} \\
            = & \sum_{t=0}^{T-1} \eta_t B_{z_k,t} \nabla_\theta l(z_{k,j}, \theta_{\epsilon_{k,j},t}^{down}) - \eta_t H_{\epsilon_{k,j},t} \frac{d \theta_{\epsilon_{k,j}, t}^{down}}{d \epsilon_{k,j}}.
        \end{split}
    \end{equation}
    Then,
    \begin{equation}
        \label{eq:if sgd down before expansion}
        \begin{split}
            & I_{param}^{sdg,down}(z_{k,j}) := \frac{d \theta_{\epsilon_{k,j}, T}^{down}}{d \epsilon_{k,j}} \bigg|_{\epsilon_{k,j}=0} \\
            & = \sum_{t=0}^{T-1} \eta_t B_{z_k,t} \nabla_\theta l(z_{k,j}, \theta_{0,t}) - \eta_t H_{0,t} \frac{d \theta_{\epsilon_{k,j}, t}^{down}}{d \epsilon_{k,j}}\bigg|_{\epsilon_{k,j}=0},
        \end{split}
    \end{equation}
    where $H_{\epsilon_{k,j},t} = \sum_{i=1}^n B_{z_i,t} \frac{\partial^2 L}{\partial \theta^2} - \epsilon_{k,j}B_{z_k, t} \frac{\partial^2 l}{\partial \theta^2}$.

    Thus, $I_{param}^{sdg,down}(z_{k,j}) = \eta_0 B_{z_k,0} \nabla_\theta l(z_{k,j}, \theta_{0,0})$, when $T=1$.

    Next, we prove \eqref{eq:if sgd down appendix} by mathematical induction.

    Base case: When $T=2$,
    \begin{equation}
        \label{eq:bs}
        \begin{split}
            & I_{param}^{sdg,down}(z_{k,j}) \\
            = & \eta_0 B_{z_k,0} \nabla_\theta l(z_{k,j}, \theta_{0,0}) + \eta_1 B_{z_k,1} \nabla_\theta l(z_{k,j}, \theta_{0,1}) \\
            & - \eta_1 H_{0,1} \eta_0 B_{z_k,0} \nabla_\theta l(z_{k,j}, \theta_{0,0}) \\
            = & (Id-\eta_1 H_{0,1})\eta_0 B_{z_k,0} \nabla_\theta l(z_{k,j}, \theta_{0,0}) \\ 
            & + \eta_1 B_{z_k,1} \nabla_\theta l(z_{k,j}, \theta_{0,1}),
        \end{split}
    \end{equation}
    which satisfies \eqref{eq:if sgd down appendix}.

    I.H.: Suppose the statement is true for any time step $T=q$, for some $q \geq 2$, namely,
    \begin{equation}
        \label{eq:ih}
        \begin{split}
            & I_{param}^{sgd,down}(z_{k,j}) = \\
            & \sum_{t=0}^{q-2} \left( \prod_{a = t+1}^{q-1} (Id - \eta_a H_{0, a}) \right) \eta_t B_{z_k, t} \nabla_\theta l(z_{k,j}, \theta_{0, t})\\
            & + \eta_{q-1} B_{z_k, q-1} \nabla_\theta l(z_{k,j}, \theta_{0, q-1}).
        \end{split}
    \end{equation}

    I.S.: For $T=q+1$, by expanding \eqref{eq:if sgd down before expansion},
    
    \begin{equation}
        \label{eq:is}
        \begin{split}
            & I_{param}^{sdg,down}(z_{k,j}) \\
            = & \quad \eta_0 B_{z_k,0} \nabla_\theta l(z_{k,j}, \theta_{0,0}) \\
            & + \eta_1 B_{z_k,1} \nabla_\theta l(z_{k,j}, \theta_{0,1}) \\
            & + \eta_2 B_{z_k,2} \nabla_\theta l(z_{k,j}, \theta_{0,2}) \\
            & + \dots \\
            & + \eta_q B_{z_k,q} \nabla_\theta l(z_{k,j}, \theta_{0,q}) \\
            & - \eta_1 H_{0,1} \frac{d \theta_{\epsilon_{k,j}, 1}^{down}}{d \epsilon_{k,j}}\bigg|_{\epsilon_{k,j}=0} \\
            & - \eta_2 H_{0,2} \frac{d \theta_{\epsilon_{k,j}, 2}^{down}}{d \epsilon_{k,j}}\bigg|_{\epsilon_{k,j}=0} \\
            & -  \dots \\
            & - \eta_q H_{0,q} \frac{d \theta_{\epsilon_{k,j}, q}^{down}}{d \epsilon_{k,j}}\bigg|_{\epsilon_{k,j}=0} \\
            = & \quad \eta_0 B_{z_k,0} \nabla_\theta l(z_{k,j}, \theta_{0,0}) \\
            & + \eta_1 B_{z_k,1} \nabla_\theta l(z_{k,j}, \theta_{0,1}) \\
            & + \eta_2 B_{z_k,2} \nabla_\theta l(z_{k,j}, \theta_{0,2}) \\
            & + \dots \\
            & + \eta_q B_{z_k,q} \nabla_\theta l(z_{k,j}, \theta_{0,q}) \\
            & - \eta_1 H_{0, 1} \eta_0 B_{z_k,0} \nabla_\theta l(z_{k,j}, \theta_{0,0}) \\
            & - \eta_2 H_{0, 2} (Id - \eta_1 H_{0,1}) \eta_0 B_{z_k,0} \nabla_\theta l(z_{k,j}, \theta_{0,0}) \\
            & - \eta_2 H_{0, 2} \eta_1 B_{z_k,1} \nabla_\theta l(z_{k,j}, \theta_{0,1}) \\
            & - \dots \\
            & - \eta_q H_{0,q} \prod_{a=1}^{q-1} (Id-\eta_a H_{0,a}) \eta_0 B_{z_k,0} \nabla_\theta l(z_{k,j}, \theta_{0,0}) \\
            & - \eta_q H_{0,q} \prod_{a=2}^{q-1} (Id-\eta_a H_{0,a}) \eta_1 B_{z_k,1} \nabla_\theta l(z_{k,j}, \theta_{0,1}) \\
            & - \eta_q H_{0,q} \prod_{a=3}^{q-1} (Id-\eta_a H_{0,a}) \eta_2 B_{z_k,2} \nabla_\theta l(z_{k,j}, \theta_{0,2}) \\
            & - \dots \\
            & - \eta_q H_{0,q} \prod_{a=q-1}^{q-1} (Id-\eta_a H_{0,a}) \eta_{q-2} B_{z_k,q-2} \\
            &\nabla_\theta l(z_{k,j}, \theta_{0,q-2}) \\
            & - \eta_q H_{0, q}\eta_{q-1} B_{z_k,q-1} \nabla_\theta l(z_{k,j}, \theta_{0,q-1}) \\
            = & \sum_{t=0}^{q-1} \left( \prod_{a=t+1}^{q} (Id-\eta_a H_{0,a}) \right) \eta_t B_{z_k, t} \nabla_\theta l(z_{k,j}, \theta_{0, t}) \\
            & + \eta_q B_{z_k, q}\nabla_\theta l(z_{k,j}, \theta_{0, q}).
        \end{split}
    \end{equation}
    Therefore, the proof is completed.
\end{proof}

\subsection{Proof of Theorem \ref{theorem:if cond}}
\label{sec:proof of theorem 1}

\begin{theorem*}
    Supposing assumptions in Lemma \ref{lemma:if closed form} holds, if $\eta_t$ is monotonically decreasing for all but finitely many time steps, and there exists $t_b \neq \infty$ such that $\eta_t \|H_{0,t}\|_2 < 1, \forall t \geq t_b$, then,

    \begin{equation}
        \begin{aligned}
            & \lim_{T\to\infty} \| I_{param}^{sgd}(z_{k,j}) \|_2 \\
            \leq & \bigg\| \sum_{t=0}^{t_b - 2} \left( \prod_{a=t+1}^{t_b-1} (Id-\eta_a H_{0,a}) \right) \eta_t B_{z_{k}, t} \nabla_{\theta}l(z_{k,j}, \theta_{0,t}) \\
            & + \eta_{t_b - 1} B_{z_k, t_b-1} \nabla_{\theta}l(z_{k,j}, \theta_{0,t_b -1}) \bigg\|_2 \\
            & + \lim_{T\to\infty} \sum_{t=t_b}^{T-1} \eta_t B_{z_k, t} \|\nabla_{\theta}l(z_{k,j}, \theta_{0,t}) \|_2.
        \end{aligned}
    \end{equation}
    
    The convergence of $\|\lim_{T \to \infty} I_{param}^{sgd}(z_{k,j})\|_2$ is determined by
    \begin{equation}
        r = \lim_{t_e \to \infty} \frac{\|\nabla_{\theta} l(z_{k,j},\theta_{0,t_f})\|_2}{\|\nabla_{\theta} l(z_{k,j},\theta_{0,t_e})\|_2},
    \end{equation}
    where $t_e < t_f$ and they are consecutive time steps such that $B_{z_k, t_e} \neq 0$ and $B_{z_k, t_f} \neq 0$. 

    Then, if $r < 1$, meaning that the gradient norm of $z_{k,j}$ is constantly decreasing, $\|\lim_{T \to \infty} I_{param}^{sgd}(z_{k,j})\|_2$ is convergent, i.e., $I_{param}^{sgd}(z_{k,j})$ exists.
    
\end{theorem*}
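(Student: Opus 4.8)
The plan is to start from the closed form of $I_{param}^{sgd,down}(z_{k,j})$ established in Lemma~\ref{lemma:if closed form} and split the summation at the index $t_b$, after which the learning-rate/Hessian product satisfies $\eta_t \|H_{0,t}\|_2 < 1$. First I would write
\begin{equation}
\begin{aligned}
I_{param}^{sgd,down}(z_{k,j}) = &\sum_{t=0}^{t_b-2} \left(\prod_{a=t+1}^{T-1}(Id-\eta_a H_{0,a})\right)\eta_t B_{z_k,t}\nabla_\theta l(z_{k,j},\theta_{0,t}) \\
&+ \sum_{t=t_b-1}^{T-1} \left(\prod_{a=t+1}^{T-1}(Id-\eta_a H_{0,a})\right)\eta_t B_{z_k,t}\nabla_\theta l(z_{k,j},\theta_{0,t}),
\end{aligned}
\end{equation}
then apply the triangle inequality and the submultiplicativity of the operator norm. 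For the terms with $t \geq t_b$, the condition $\eta_a\|H_{0,a}\|_2 < 1$ gives $\|Id-\eta_a H_{0,a}\|_2 \leq 1$, so each product of such factors has norm at most $1$ and can be dropped from the bound, yielding $\sum_{t=t_b}^{T-1}\eta_t B_{z_k,t}\|\nabla_\theta l(z_{k,j},\theta_{0,t})\|_2$ for the tail. The early terms ($t \leq t_b-1$) are grouped and bounded by pulling the common factor $\prod_{a=t_b}^{T-1}(Id-\eta_a H_{0,a})$ out; since that factor also has norm $\leq 1$, the early block is dominated by its value at $T=t_b$, which is exactly the first bracketed expression in the claimed inequality. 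Taking $T\to\infty$ then produces \eqref{eq:upper boud if i sgd}.

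Next I would establish the convergence criterion via the ratio $r$. The finite initial block is a fixed constant independent of $T$, so convergence of $\|\lim_{T\to\infty} I_{param}^{sgd}(z_{k,j})\|_2$ reduces to convergence of the tail series $\sum_{t\geq t_b} \eta_t B_{z_k,t}\|\nabla_\theta l(z_{k,j},\theta_{0,t})\|_2$. Because $B_{z_k,t}$ is nonzero only at the time steps where $z_k$ appears in the batch, this is effectively a sum indexed by the consecutive active steps $t_e < t_f$. The plan is to apply the ratio test: the ratio of the magnitudes of consecutive nonzero terms is
\begin{equation}
\frac{\eta_{t_f}\|\nabla_\theta l(z_{k,j},\theta_{0,t_f})\|_2}{\eta_{t_e}\|\nabla_\theta l(z_{k,j},\theta_{0,t_e})\|_2},
\end{equation}
and since $\eta_t$ is eventually monotonically decreasing its successive ratio is at most $1$ in the limit, so the limiting term ratio is governed by the gradient-norm ratio $r$. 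Hence if $r<1$ the tail series converges by the ratio test, establishing existence of $I_{param}^{sgd}(z_{k,j})$.

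The main obstacle I anticipate is the careful bookkeeping in the telescoping/regrouping step that collapses the early block down to its $T=t_b$ value: one must verify that factoring $\prod_{a=t_b}^{T-1}(Id-\eta_a H_{0,a})$ out of every early term is legitimate (the matrices $H_{0,a}$ need not commute, so the product order must be respected) and that bounding this common factor by $1$ using $\eta_a\|H_{0,a}\|_2<1$ does not discard information needed for the inequality to be tight enough. A secondary subtlety is justifying that eventual monotonic decrease of $\eta_t$ suffices to make the learning-rate ratio contribute a factor of at most $1$ in the limit, so that the ratio test cleanly isolates $r$; this requires that only finitely many steps violate monotonicity, which is exactly the hypothesis, but it must be invoked explicitly when passing to $\lim_{t_e\to\infty}$.
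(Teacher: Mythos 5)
Your proposal is correct and follows essentially the same route as the paper's proof: split the closed form from Lemma~\ref{lemma:if closed form} at $t_b$, use $\eta_a\|H_{0,a}\|_2<1$ to bound the operator norms of the tail products by $1$ (factoring the common product $\prod_{a=t_b}^{T-1}(Id-\eta_a H_{0,a})$ out of the early block so it collapses to its $T=t_b$ value), and then apply the ratio test to the tail series, using the eventual monotonicity of $\eta_t$ to reduce the term ratio to $r$. The only cosmetic discrepancy is that your displayed decomposition splits at $t_b-1$ rather than $t_b$, but your prose handles the bookkeeping consistently with the stated bound.
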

\begin{proof}
    \begin{equation}
        \label{eq:cond1 if sgd}
        \begin{aligned}
            & \lim_{T\to\infty} \| I_{param}^{sgd}(z_{k,j}) \|_2 \\
            \leq & \underbrace{\lim_{T\to\infty} \bigg\| \sum_{t=0}^{t_b - 1} \left( \prod_{a=t+1}^{T-1} (Id-\eta_a H_{0,a}) \right) \eta_t B_{z_{k}, t} }_{\uppercase\expandafter{\romannumeral1}} \\
            & \underbrace{\nabla_{\theta}l(z_{k,j}, \theta_{0,t}) \bigg\|_2}_{\uppercase\expandafter{\romannumeral1}} \\
            & + \underbrace{\lim_{T\to\infty} \bigg\| \sum_{t=t_b}^{T-2} \left( \prod_{a=t+1}^{T-1} (Id-\eta_a H_{0,a}) \right) \eta_t B_{z_{k}, t}}_{\uppercase\expandafter{\romannumeral2}} \\
            & \underbrace{ \nabla_{\theta}l(z_{k,j}, \theta_{0,t}) + \eta_{T-1} B_{z_k, T-1} \nabla_\theta l(z_{k,j}, \theta_{0, T-1}) \bigg\|_2}_{\uppercase\expandafter{\romannumeral2}}.
        \end{aligned}
    \end{equation}
    Given that $\eta_b \|H_{0,t}\|_2 < 1, \forall t \geq t_b$, 
    \begin{equation}
        \lim_{T\to\infty} \|\prod_{a=t+1}^{T-1} (Id-\eta_a H_{0,a}) \|_2 \leq 1, \quad \forall t \geq t_b.
    \end{equation}
    Thus,
    \begin{equation}
        \begin{aligned}
            & \| \uppercase\expandafter{\romannumeral1} \|_2 \\
            \leq & \bigg\| \sum_{t=0}^{t_b - 2} \left( \prod_{a=t+1}^{t_b-1} (Id-\eta_a H_{0,a}) \right) \eta_t B_{z_{k}, t} \nabla_{\theta}l(z_{k,j}, \theta_{0,t}) \\
            & + \eta_{t_b - 1} B_{z_k, t_b-1} \nabla_{\theta}l(z_{k,j}, \theta_{0,t_b -1}) \bigg\|_2,
        \end{aligned}
    \end{equation}
    which is convergent. We also have
    \begin{equation}
        \begin{aligned}
            \| \uppercase\expandafter{\romannumeral2} \|_2
            \leq &\bigg\| \sum_{t=t_b}^{T-1} \eta_t B_{z_k, t} \nabla_{\theta}l(z_{k,j}, \theta_{0,t}) \bigg\|_2 \\
            \leq & \sum_{t=t_b}^{T-1} \eta_t B_{z_k, t} \|\nabla_{\theta}l(z_{k,j}, \theta_{0,t}) \|_2.
        \end{aligned}
    \end{equation}
    By ratio test, the sufficient condition for upper bound of $\| \uppercase\expandafter{\romannumeral2} \|_2$ being convergent is
    \begin{equation}
        \lim_{t_e\to\infty}\frac{\eta_{t_f}\|\nabla_{\theta}l(z_{k,j}, \theta_{0,t_f}) \|_2}{\eta_{t_e}\|\nabla_{\theta}l(z_{k,j}, \theta_{0,t_e}) \|_2} < 1.
    \end{equation}
    Let 
    \begin{equation}
        r = \lim_{t_e\to\infty}\frac{\|\nabla_{\theta}l(z_{k,j}, \theta_{0,t_f}) \|_2}{\|\nabla_{\theta}l(z_{k,j}, \theta_{0,t_e}) \|_2}.
    \end{equation}
    Since $\eta_{t_e} \geq \eta_{t_f}$ when $t_e < t_f$,
    \begin{equation}
        \lim_{t_e\to\infty}\frac{\eta_{t_f}\|\nabla_{\theta}l(z_{k,j}, \theta_{0,t_f}) \|_2}{\eta_{t_e}\|\nabla_{\theta}l(z_{k,j}, \theta_{0,t_e}) \|_2} \leq r.
    \end{equation}
    If $r < 1$, $\| I_{param}^{sgd}(z_{k,j}) \|_2$ is convergent. If $r > 1$ or $\lim_{t\to\infty} \|\nabla_{\theta}l(z_{k,j}, \theta_{0,t}) \|_2 \neq 0$, the upper bound of $\| I_{param}^{sgd}(z_{k,j}) \|_2$ can be divergent.
\end{proof}

\subsection{Other Sufficient Conditions for $I_{param}^{sgd}(z_{k,j})$ Being Convergent}
The following Corollaries demonstrate other sufficient conditions for $I_{param}^{sgd}(z_{k,j})$ being convergent. However, as all the Hessian matrices $H_{0,t}, \forall t \geq t_b$ are required to be non-singular, they may not be useful for general deep learning models.

\begin{corollary*}
    Under assumption of Lemma \ref{lemma:if closed form}, if $\eta_t$ converges to a sufficiently small constant $\eta_b$ such that $\eta_b \|H_{0,t}\|_2 < 1, \forall t \geq t_b$, and $H_{0,t}, \forall t \geq t_b$ are non-singular, then $\lim_{T \to \infty} I_{param}^{sgd}(z_{k,j})$ is convergent. Specifically, when $B_{z_k, t} \equiv 1$, $\|\lim_{T \to \infty} I_{param}^{sgd}(z_{k,j})\|_2 \leq \|M^{-1}v\|$, where $M = \argmin_{\{H_{0,t}; t \geq t_b\}} \sigma_{min}(H_{0,t})$, $v = \argmax_{\{\nabla_{\theta} l(z_{k,j},\theta_{0,t}); t \geq t_b\}} \|M\nabla_{\theta} l(z_{k,j},\theta_{0,t})\|_2$, and $\sigma_{min}$ denotes the minimum singular value.
\end{corollary*}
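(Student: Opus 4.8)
First I would start from the closed form for $I_{param}^{sgd,down}(z_{k,j})$ established in Lemma~\ref{lemma:if closed form} and split its sum at the index $t_b$, exactly as in the proof of Theorem~\ref{theorem:if cond}: a finite head over $0\le t\le t_b-1$ plus an infinite tail over $t\ge t_b$. The head is a fixed finite sum of well-defined vectors, hence trivially convergent, so the whole question reduces to controlling the tail as $T\to\infty$. The only difference from Theorem~\ref{theorem:if cond} is that here I would exploit non-singularity of the Hessians, rather than a decaying gradient norm, to tame the tail.

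The crux is the propagated product $\prod_{a=t+1}^{T-1}(Id-\eta_a H_{0,a})$. Each $H_{0,t}$ is symmetric and non-singular, and in the well-trained (convex) regime positive definite, so its singular values equal its eigenvalues and $\sigma_{min}(H_{0,t})=\lambda_{min}(H_{0,t})>0$. Together with $\eta_b\|H_{0,t}\|_2<1$ for $t\ge t_b$ this yields the strict contraction
\begin{equation}
\|Id-\eta_a H_{0,a}\|_2 = 1-\eta_a\sigma_{min}(H_{0,a})<1,\qquad a\ge t_b.
\end{equation}
By submultiplicativity the product norm is bounded by $\prod_{a=t+1}^{T-1}(1-\eta_a\sigma_{min}(H_{0,a}))$, which decays geometrically, so taking norms term by term dominates the tail by a convergent geometric-type series. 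This proves that $\lim_{T\to\infty}I_{param}^{sgd}(z_{k,j})$ exists; a general mask $B_{z_k,t}$ only deletes terms and hence cannot destroy convergence.

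For the explicit estimate when $B_{z_k,t}\equiv1$, I would pass to the worst case in both factors. Replacing every $H_{0,a}$ by the matrix $M$ of smallest $\sigma_{min}$ enlarges each contraction factor to $1-\eta_b\sigma_{min}(M)$ and hence maximizes the geometric sum; re-indexing by $s=T-1-t$ and summing $\eta_b\sum_{s\ge0}(Id-\eta_b M)^s=\eta_b(\eta_b M)^{-1}=M^{-1}$ collapses the matrix factors to $M^{-1}$. Bounding every gradient encountered along the tail by the worst-case gradient $v=\argmax_t\|M\nabla_\theta l(z_{k,j},\theta_{0,t})\|_2$ then produces the stated bound $\|M^{-1}v\|_2$.

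The main obstacle is precisely this contraction estimate: it genuinely requires the Hessians to be positive definite, not merely non-singular. A single negative eigenvalue would give $|1-\eta_a\lambda|=1+\eta_a|\lambda|>1$, so $\|Id-\eta_a H_{0,a}\|_2>1$ and the geometric domination would collapse; this is exactly why the paper cautions that the corollary may not apply to general deep learning models. A secondary delicate point is the bookkeeping that turns the varying-Hessian, varying-gradient tail into the single clean constant $\|M^{-1}v\|_2$ rather than a looser product such as $\|M^{-1}\|_2\max_t\|\nabla_\theta l(z_{k,j},\theta_{0,t})\|_2$; this step implicitly leans on the same positive-definiteness and on the gradients along the tail being well aligned.
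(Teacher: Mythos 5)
Your proposal is correct and follows essentially the same route as the paper's proof: split the closed form from Lemma~\ref{lemma:if closed form} at $t_b$, use that each factor $Id-\eta_a H_{0,a}$ is a strict contraction for $a\ge t_b$ to control the propagated products, and collapse the worst-case tail via the Neumann series $\eta_b\sum_{s\ge 0}(Id-\eta_b M)^{s}=M^{-1}$ applied to the worst-case gradient $v$. Your observation that this contraction genuinely requires the $H_{0,t}$ to be positive definite rather than merely non-singular (a negative eigenvalue gives $\|Id-\eta_a H_{0,a}\|_2>1$) flags a real subtlety the paper's one-line justification passes over, and the only minor slip on your side is calling the head a ``fixed finite sum'' when its terms still carry the product up to $T-1$ --- though your own contraction bound covers that case as well.
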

\begin{proof}
    If $\eta_t$ converges to a sufficiently small constant $\eta_b \neq 0$ such that $\eta_b \|H_{0,t}\|_2 < 1, \forall t \geq t_b$, and $H_{0,t}, \forall t \geq t_b$ are non-singular, then 
    \begin{equation}
        \begin{aligned}
            \lim_{T\to\infty} \|\prod_{a=t_b+1}^{T-1} (Id-\eta_a H_{0,a}) \|_2 = 0,
        \end{aligned}
    \end{equation}
    and thereby $\uppercase\expandafter{\romannumeral1} = 0$.
    \begin{equation}
        \begin{aligned}
            & \lim_{T\to\infty}\|I_{param}^{sgd}(z_{k,j})\|_2 \\
            \leq & \lim_{T\to\infty} \left\| \sum_{t=t_b}^{T-1} (Id - \eta_b M)^{T-t-1} \eta_b B_{z_{k}, t} v \right\|_2.
        \end{aligned}
    \end{equation}
    Since $0 < \|H_{0,t}\|_2 < 1, \forall t \geq t_b$, $\lim_{T\to\infty}\|I_{param}^{sgd}(z_{k,j})\|_2$ is convergent. 
    Specifically, if $B_{z_{k}, t} \equiv 1$, by Neumann series, $\lim_{T\to\infty}\|I_{param}^{sgd}(z_{k,j})\|_2 \leq \lim_{T\to\infty} \| \sum_{t=t_b}^{T-1} (Id - \eta_b M)^{T-t-1} \eta_b v \|_2 = \|M^{-1}v\|_2$.
\end{proof}

Empirical studies demonstrate that most of eigenvalues of Hessian matrices are clustered near 0 in deep learning models. Hence,$H_{0,t}, \forall t \geq t_b$ being non-singular is a strong assumption and may not be satisfied for most real-world models.

\begin{corollary*}
    Under assumption of Lemma \ref{lemma:if closed form}, if $\eta_t$ converges to 0 in finite time step, then $\lim_{T \to \infty} I_{param}^{sgd}(z_{k,j})$ is a finite series and is therefore convergent.
\end{corollary*}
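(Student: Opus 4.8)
The plan is to exploit the closed-form expression for $I_{param}^{sgd,down}(z_{k,j})$ established in Lemma \ref{lemma:if closed form} and to observe that the hypothesis forces all but finitely many terms to vanish identically. First I would formalize the hypothesis: the phrase ``$\eta_t$ converges to $0$ in finite time step'' means there exists a finite index $t_0$ such that $\eta_t = 0$ for every $t \geq t_0$. Since Lemma \ref{lemma:if closed form} also gives $I_{param}^{sgd,up}(z_{k,j}) = -I_{param}^{sgd,down}(z_{k,j})$, it suffices to treat the down-weighting case; convergence of one yields convergence of the other.

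Next I would substitute $\eta_t = 0$ for $t \geq t_0$ into the formula of Lemma \ref{lemma:if closed form}, where two simplifications occur simultaneously. Every additive term carries a factor $\eta_t B_{z_k,t}\nabla_\theta l(z_{k,j},\theta_{0,t})$, so all terms indexed by $t \geq t_0$ are exactly zero, and the trailing term $\eta_{T-1}B_{z_k,T-1}\nabla_\theta l(z_{k,j},\theta_{0,T-1})$ vanishes as soon as $T-1 \geq t_0$. Moreover, each factor $(Id-\eta_a H_{0,a})$ with $a \geq t_0$ collapses to $Id$, so for $t < t_0$ the product $\prod_{a=t+1}^{T-1}(Id-\eta_a H_{0,a})$ reduces to $\prod_{a=t+1}^{t_0-1}(Id-\eta_a H_{0,a})$, which no longer depends on $T$.

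Putting these together, I would conclude that for every $T \geq t_0+1$ the influence takes the fixed value
\begin{equation*}
I_{param}^{sgd,down}(z_{k,j}) = \sum_{t=0}^{t_0-1}\left(\prod_{a=t+1}^{t_0-1}(Id-\eta_a H_{0,a})\right)\eta_t B_{z_k,t}\nabla_\theta l(z_{k,j},\theta_{0,t}),
\end{equation*}
a sum of finitely many bounded matrix--vector products. Since the sequence indexed by $T$ is eventually constant, its limit exists and equals this finite series, which establishes convergence.

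The argument involves no genuine obstacle: unlike Theorem \ref{theorem:if cond}, it requires neither a ratio test nor any spectral control of $H_{0,t}$, because the finite termination of the learning rate truncates the series outright and the non-singularity assumptions of the preceding corollaries are unnecessary. The only point requiring care is the bookkeeping at the index boundary $t_0$---checking that the surviving product runs up to $t_0-1$ rather than $T-1$ and that the trailing term drops out once $T-1 \geq t_0$---so I would verify these endpoints explicitly rather than gloss over them.
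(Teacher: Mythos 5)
Your proof is correct and follows the same idea as the paper, which disposes of this corollary in a single sentence (the series is finite, and each term is finite, hence convergent); you simply make explicit the truncation that the paper leaves implicit. Your boundary bookkeeping — the surviving product collapsing to $\prod_{a=t+1}^{t_0-1}(Id-\eta_a H_{0,a})$ and the trailing term vanishing once $T-1\geq t_0$ — is accurate and, if anything, more careful than the paper's own justification.
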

The proof of this corollary is straightforward, as each element in the sequence does not go to infinity.
However, in most LMs training settings, $\eta_t$ is not 0 until $t$ reaches $T$.

\subsection{Proof of Corollary \ref{theorem:hif unified}}
\label{sec:proof of corollary 1}

\begin{corollary*}
    Under assumptions of Lemma \ref{lemma:if closed form}, further assume that $T \to \infty$, $\eta_t$ converges to a constant $\eta_c$, $B_{z_i, t} \equiv 1$, $\sum_{i=1}^n \nabla_{\theta} L(z_i, \theta_{0,t_{c}}) = 0$ at $t_{c}$, $\|H_{0,t_c}\|_2 < \frac{1}{\eta_c}$, and $H_{0,t_c}$ is non-singular. Then,
    \begin{equation}
        \begin{aligned}
            \lim_{T \to \infty} I^{sgd,down}_{param}(z_{k,j})
            =  H_{t_c}^{-1} \nabla_{\theta} l(z_{k,j}, \theta_{0,t_c})
        \end{aligned}
    \end{equation}
    If these assumptions do not hold, the errors introduced by the assumptions can be amplified by the gradient norm. For example, if $B_{z_i, t} \not\equiv 1$,
    \begin{equation}
        \label{eq:hif mini batch error}
        \begin{aligned}
            & \|\lim_{T \to \infty} I_{param}^{sgd}(z_{k,j}) - I_{param}^{hif}(z_{k,j})\|_2 \\
            \leq & \lim_{T \to \infty} \sum_{t=t_c}^{T-2} \bigg\| \prod_{a=t+1}^{T-1} (Id - \eta_c H_{0,t_c}) \bigg\|_2 (B_{z_k, t} - 1) \eta_c  \\
            & \|\nabla_{\theta} l(z_{k,j}, \theta_{0,t_c})\|_2 \\
            & + (B_{z_k, T-1} - 1)\eta_{c} \|\nabla_\theta l(z_{k,j}, \theta_{0, t_c})\|_2.
        \end{aligned}
    \end{equation}
\end{corollary*}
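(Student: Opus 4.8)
The plan is to start from the closed form for $I_{param}^{sgd,down}(z_{k,j})$ proved in Lemma \ref{lemma:if closed form} and specialize it under the convergence assumptions, then read off the error bound by comparing the general-batch expression against the full-batch one. First I would exploit the stationarity that the assumptions force: since $B_{z_i,t}\equiv 1$ and $\sum_{i=1}^n\nabla_{\theta}L(z_i,\theta_{0,t_c})=0$ at $t_c$, the unperturbed SGD update is zero for every $t\geq t_c$, so the parameters are frozen at $\theta_{0,t_c}$. Hence for all $t\geq t_c$ one has $H_{0,t}=H_{0,t_c}=:H$, $\nabla_{\theta}l(z_{k,j},\theta_{0,t})=\nabla_{\theta}l(z_{k,j},\theta_{0,t_c})=:g$, $\eta_t=\eta_c$, and $B_{z_k,t}=1$. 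I would then split the Lemma \ref{lemma:if closed form} sum into the transient part $t<t_c$ and the stationary part $t\geq t_c$.

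For the stationary part I would reindex with $s=T-1-t$, collapsing it into the truncated Neumann series $\sum_{s=0}^{T-1-t_c}(Id-\eta_c H)^{s}\,\eta_c g$. The spectral condition does the work here: $\|H\|_2<\tfrac{1}{\eta_c}$ places the eigenvalues of $\eta_c H$ in $(0,1)$, and non-singularity of $H$ excludes the eigenvalue $1$ for $Id-\eta_c H$, so its spectral radius is strictly below $1$. Consequently the series converges to $(\eta_c H)^{-1}\eta_c g = H^{-1}g = H_{t_c}^{-1}\nabla_{\theta}l(z_{k,j},\theta_{0,t_c})$. For the transient part $t<t_c$, each summand carries the product $\prod_{a=t+1}^{T-1}(Id-\eta_a H_{0,a})$, which as $T\to\infty$ acquires infinitely many contracting factors $(Id-\eta_c H)$; by the same spectral-radius bound these terms vanish in the limit, establishing the equality in the first claim.

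For the second claim I would identify $I_{param}^{hif}(z_{k,j})=H_{t_c}^{-1}g$ as exactly the $B_{z_k,t}\equiv 1$ limit just computed, then subtract the general-$B$ Lemma \ref{lemma:if closed form} expression from its full-batch counterpart term by term. Matching terms share the same curvature products and converged gradients, so their difference is governed solely by the factor $(B_{z_k,t}-1)$. Applying the triangle inequality together with submultiplicativity $\|Av\|_2\leq\|A\|_2\|v\|_2$ lets me pull $\|\nabla_{\theta}l(z_{k,j},\theta_{0,t_c})\|_2$ out of every term, producing precisely the stated bound and exhibiting that the discrepancy is amplified by the gradient norm.

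The main obstacle is the coupling between the batch indicator $B_{z_k,t}$ and the Hessian $H_{0,t}=\sum_{i}B_{z_i,t}\,\partial^2 L/\partial\theta^2$: toggling whether $z_k$ sits in batch $t$ also perturbs the curvature operator, and before $t_c$ the learning rates, Hessians, and gradients are genuinely non-constant. The bound finesses this by freezing the curvature at the converged value $H_{0,t_c}$ and isolating only the \emph{explicit} batch-indicator dependence in the gradient term, so the delicate step is to argue that this explicit dependence is the dominant error source while the Neumann-series contraction absorbs the transient pre-$t_c$ contributions.
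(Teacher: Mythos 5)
Your proposal follows essentially the same route as the paper: split the closed form from Lemma \ref{lemma:if closed form} at $t_c$, kill the pre-$t_c$ transient via the accumulating contraction factors $(Id-\eta_c H_{0,t_c})$, sum the post-$t_c$ tail as a Neumann series converging to $H_{t_c}^{-1}\nabla_\theta l(z_{k,j},\theta_{0,t_c})$, and obtain the mini-batch error bound by term-by-term comparison with the triangle inequality and submultiplicativity. Your explicit observation that the zero full-batch gradient at $t_c$ freezes the parameters (so that all Hessians and gradients for $t\geq t_c$ equal their $t_c$ values) is a step the paper uses implicitly, and is a welcome clarification rather than a deviation.
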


\begin{proof}
    \begin{equation}
        \label{eq:proof hif}
        \begin{aligned}
            & \lim_{T \to \infty} I^{sgd,down}_{param}(z_{k,j}) \\
            = & \underbrace{\lim_{T \to \infty}  \sum_{t=0}^{t_c - 1} \left( \prod_{a=t+1}^{T-1} (I-\eta_c H_{0,a}) \right) \eta_t \nabla_{\theta} l(z_{k,j}, \theta_{0,t})}_{\uppercase\expandafter{\romannumeral1}} \\
            & + \underbrace{\lim_{T \to \infty} \sum_{t=t_c}^{T-2} \left( \prod_{a=t+1}^{T-1} (1 - \eta_c H_{0,t_c}) \right) \eta_c \nabla_{\theta} l(z_{k,j}, \theta_{0,t_c})}_{\uppercase\expandafter{\romannumeral2}}\\
            & \underbrace{+ \eta_{c}\nabla_\theta l(z_{k,j}, \theta_{0, t_c})}_{\uppercase\expandafter{\romannumeral2}}
        \end{aligned}
    \end{equation}
    Given that $\|H_{0,t_c}\| \leq \frac{1}{\eta_c}$, $\lim_{T \to \infty}\prod_{a=t_c}^{T-1} (I-\eta_c H_{0, t_c})=0$; consequently, $\uppercase\expandafter{\romannumeral1}$ in \eqref{eq:proof hif} is 0.
    \begin{equation}
        \uppercase\expandafter{\romannumeral2} = \lim_{T \to \infty} \sum_{t=t_c}^{T-1} (I-\eta_c H_{0,t_c})^{T-t_c-1} \eta_c \nabla_\theta l(z_{k,j}, \theta_{0, t_c})
    \end{equation}
    Given $\|H_{0,t_c}\| \leq \frac{1}{\eta_c}$, by Neumann series, $\lim_{T \to \infty} I^{sgd,down}_{param}(z_{k,j}) = H_{t_c}^{-1} \nabla_{\theta} l(z_{k,j}, \theta_{0,t_c})$.

    If $B_{z_i, t} \not\equiv 1$, \eqref{eq:hif mini batch error} can be directly proven by Cauchy-Schwarz and Triangle inequalities.
\end{proof}

\subsection{Algorithm of HAIF}
\label{sec:hiaf algorithm}

The algorithm implementing HAIF is summarized in Algorithm \ref{alg:haif}. 
Weights for adjusting outputs are initially computed and cached in function output\_adjustment (line 12-19). Notably, function output\_adjustment only needs to be run once for all test samples. Subsequently, the gradient of the test sample is determined (line 3). Following this, an iteration over the training samples is performed to compute the dot product between the gradient of the test sample and each individual training sample (line 4-11). 
In comparison to the current IFs, HAIF is computational efficient and able to accurately trace the most influential samples.

\begin{algorithm}[ht!]
    \caption{HAIF Algorithm}
    \label{alg:haif}
    \begin{algorithmic}[1]
        \Require $z_{test}, \{z_k\}_{k=1}^{n}, \theta_T, l, L$
        \Ensure $\{s_k\}_{k=1}^n$
        \State $\{s_k\}_{k=1}^n \gets \{0\}_1^n$
        \State $\{w_{k,j}\} \gets output\_adjustment(\theta_T,\{z_k\}_{k=1}^{n}, l)$
        \State $g_{t} \gets grad(L(z_{test}, \theta^c)) $
        \For{$k \gets 1$ to $n$}
            \State $L_k \gets 0$
            \For{$j \gets 1$ to $m$}
                \State $L_k = L_k + w_{k,j}l(z_{k,j},\theta^c)$
            \EndFor
            \State $g_k \gets grad(L_k)$
            \State $s_k = s_k + A(g_t^T)$
        \EndFor
        \Function{output\_adjustment}{$\theta_T,\{z_k\}_{k=1}^{n}, l$}
            \For{$k \gets 1$ to $n$}
                \For{$j \gets 1$ to $m$}
                    \State $w_{k,j} \gets W(\|grad(l(z_{k,j},\theta_T))\|)$
                \EndFor
            \EndFor
            \State \Return $\{w_{k,j}\}$
        \EndFunction
    \end{algorithmic}
\end{algorithm}

\subsection{Dataset Examples for PII-E and PII-CR}
\label{sec:dataset example}
Here, we provide more details about PII-E and PII-CR.
Table \ref{tab:pii-e example} and \ref{tab:pii-cr example} illustrate two real examples in datasets. Name and all attributes are generated by Faker \footnote{https://github.com/joke2k/faker}.

Table \ref{tab:pii-e example} illustrates the attributes, pretraining data, and instruction data of a data subject, Liao Tingting, in the PII-E dataset. For example, models are trained to predict `1912/01/01' for the DOB of Liao Tingting. Furthermore, IFs are expected to trace this prediction back to the pretraining data of Liao Tingting.

PII-CR is more challenging, as shown in Table \ref{tab:pii-cr example}. When asked for the DOB of Liao Tingting, models are expected to answer `01/01'. IFs are expected to trace this prediction to the pretraining data, where `01/01' does not exist, while the festival New Year is contained. Models should understand the link among Liao Tingting, `01/01', and New Year. IFs are expected to trace across such reasoning.

\begin{table}[ht!]
    \centering
    \caption{An example of PII-E dataset}
    \label{tab:pii-e example}
    \resizebox{\linewidth}{!}{%
    \begin{tabular}{lllll}
        \toprule
        \multicolumn{5}{c}{Virtual   Data Subject}                                                                                                                                       \\ \hline
        \multicolumn{1}{l|}{Name}          & \multicolumn{1}{l|}{DOB}        & \multicolumn{1}{l|}{Email}          & \multicolumn{1}{l|}{Phone}       & Address                          \\ \hline
        \multicolumn{1}{l|}{Liao Tingting} & \multicolumn{1}{l|}{1912/01/01} & \multicolumn{1}{l|}{yqiu@yahoo.com} & \multicolumn{1}{l|}{18859382421} & Block N, Qiqihar Road... \\ \hline \hline
        \multicolumn{5}{c}{Pretraining   (Biology)}                                                                                                                                      \\ \hline
        \multicolumn{5}{l}{Liao Tingting was   born on Jan. 1, 1912, at Block N, Qiqihar Road... She is a hard ...}                                                           \\ \hline \hline
        \multicolumn{5}{c}{Instructions}                                                                                                                                                  \\ \hline
        \multicolumn{3}{l|}{Q: What's the DOB of   Liao Tingting?}                                                 & \multicolumn{2}{l}{A: 1912/01/01}                                   \\ \hline
        \multicolumn{3}{l|}{Q: What's the Email   of Liao Tingting?}                                               & \multicolumn{2}{l}{A: yqiu@yahoo.com}                               \\ \hline
        \multicolumn{3}{l|}{Q: What's the Phone   of Liao Tingting?}                                               & \multicolumn{2}{l}{A: 18859382421}                                  \\ \hline
        \multicolumn{3}{l|}{Q: What's the   Address of Liao Tingting?}                                             & \multicolumn{2}{l}{A: Block N, Qiqihar Road, Xincheng…}             \\ \bottomrule
        \end{tabular}%
    }
\end{table}

\begin{table}
    \centering
    \caption{An example of PII-CR dataset}
    \label{tab:pii-cr example}
    \resizebox{\linewidth}{!}{%
    \begin{tabular}{lll}
        \toprule
        \multicolumn{3}{c}{Virtual   Data Subject}                                                                                                                         \\ \hline
        \multicolumn{1}{l|}{Name}                                       & \multicolumn{1}{l|}{DOB}                               & Address                                 \\ \hline
        \multicolumn{1}{l|}{Liao Tingting}                              & \multicolumn{1}{l|}{01/01}                             & Shanghai                                \\ \hline \hline
        \multicolumn{3}{c}{Pretraining   (Biology)}                                                                                                                        \\ \hline
        \multicolumn{3}{l}{\begin{tabular}[c]{@{}l@{}}Liao Tingting is   born on New Year. Adjacent to Liao\\ Tingting's place is Oriental Pearl TV   Tower.\end{tabular}} \\ \hline \hline
        \multicolumn{3}{c}{Instructions}                                                                                                                                   \\ \hline
        \multicolumn{2}{l|}{Q: What's the DOB of   Liao Tingting?}                                                               & A: 01/01                                \\ \hline
        \multicolumn{2}{l|}{Q: What's the   Address of Liao Tingting?}                                                           & A: Shanghai                            
        \\ \bottomrule
    \end{tabular}%
    }
\end{table}

\subsection{Detailed Privacy Learning Abilities of LM}
\label{sec:privacy learning}

\begin{figure*}[htbp]
    \centering

    \begin{subfigure}{0.49\textwidth}
        \centering
        \includegraphics[width=\linewidth]{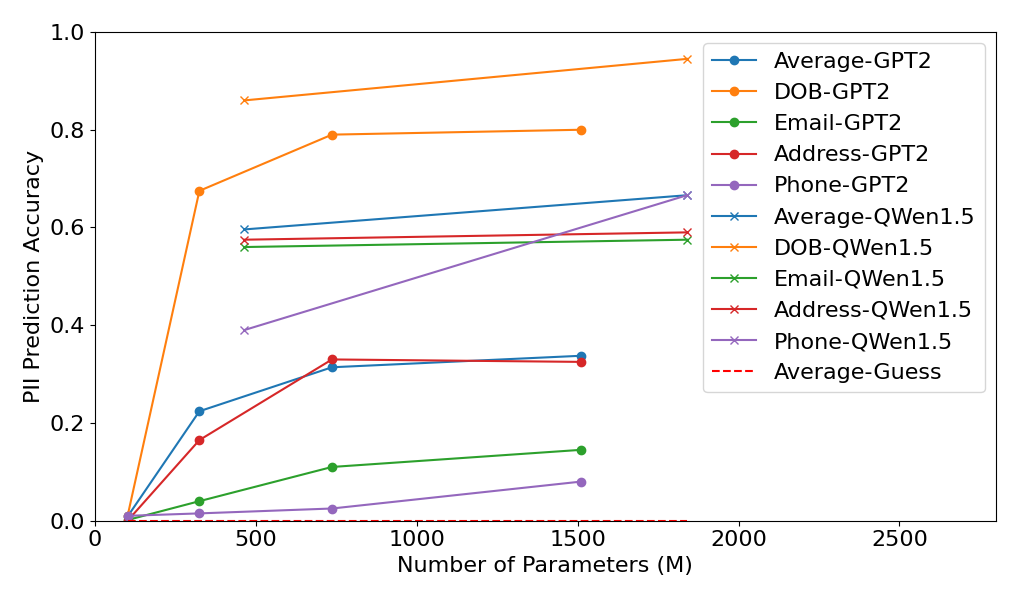}
        \caption{Results on PII-E Dataset}
        \label{fig:pii-e}
    \end{subfigure}
    \hfill
    \begin{subfigure}{0.49\textwidth}
        \centering
        \includegraphics[width=\linewidth]{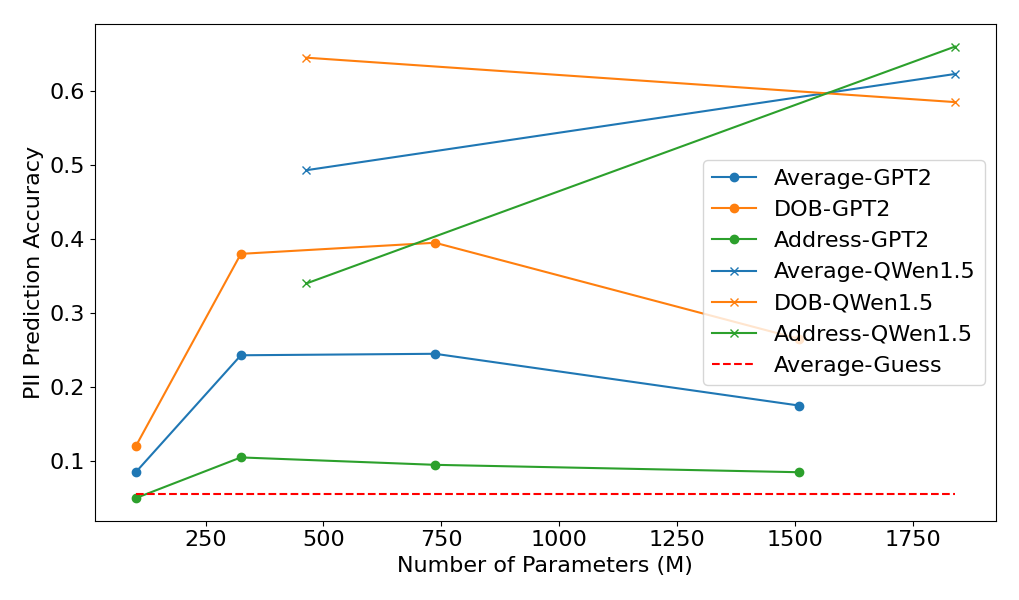}
        \caption{Results on PII-CR Dataset}
        \label{fig:pii-cr}
    \end{subfigure}
    \caption{PII Prediction Accuracy of GPT2 and QWen1.5. The red dashed line represents the random guess performance of the QWen1.5-0.5B model, trained exclusively with instruction data}
\end{figure*}

In this work, we use classical GPT2 series and QWen1.5 series (one of the SOTA open-source LLMs \cite{huang_c-eval_2023}) as base models. We use PII-E and PII-CR datasets to fine-tune various model series and scales. Since model training aims to construct as many traceable samples as possible for evaluating tracing performance, each model are trained with 50 epochs while saving the model with highest PII prediction accuracy. With this in mind, no over-fitting assumption is involved when evaluating tracing accuracy. All models are trained with AdamW optimizer with default settings and linear scheduler. As regular training procedure leads to OOM for QWen1.5-1.8B and GPT2-xLarge, we train them with DeepSpeed-ZeRO2 \cite{rajbhandari_zero_2020}.

The PII prediction accuracy of GPT2 and QWen1.5 on PII-E dataset is illustrated in Fig. \ref{fig:pii-e}. 
In general, as the number of parameters increases, so does the ability to memorize and extract PII. More advanced and contemporary model architectures also enhance PII extraction capabilities. Considering the variations in length and uniqueness among different types of PII, all PII types except for DOB present more extraction challenges. 
As shown in Fig. \ref{fig:pii-cr}, similar pattern is also observed in PII-CR dataset. However, the dataset requires models to reason based on memorized PII information in addition to memorization, the PII prediction accuracy significantly decreases compared to PII-E dataset. 
We further trained the QWen1.5-0.5B model solely using the instruction data from the two datasets. The performance of model was equivalent to a random guess. In the case of PII-E, the model was unable to infer the linkage between names and privacy attributes due to the uniqueness of PII. However, for PII reasoning datasets PII-CR, even without pretraining data, the models could potentially make a correct prediction if they learned the answer format. This random guess behavior greatly influenced the LOOR results, as demonstrated in Section \ref{sec:loo}.

Given that even the smallest QWen1.5 model performs better (with overwhelming superiority) in the PII prediction accuracy across all PII types compared to the largest GPT2 model, we can conclude that as model architectures become more advanced and their fundamental capabilities are enhanced, models can extract personal information from more obscure expressions, thereby increasing the risk of privacy leakage.

\subsection{The Actual Influences of Tokens}
\label{sec:large norm small influence}

\begin{figure}
    \centering
    \includegraphics[trim={0 0.7cm 0 0},clip,width=\linewidth]{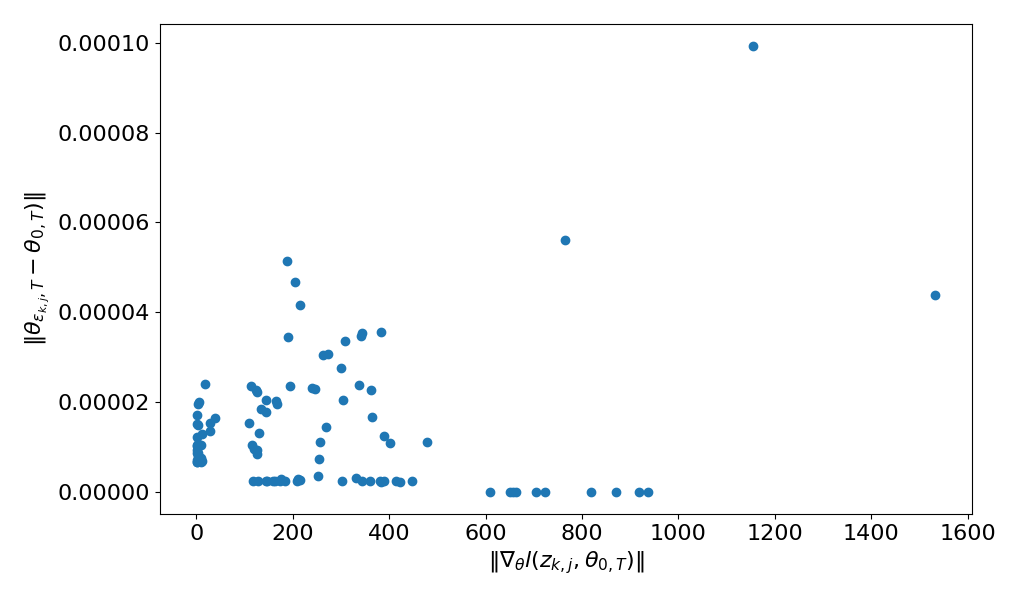}
    \caption{Comparison of Token Influences and Token Gradient Norms}
    \label{fig:large norm small influence}
\end{figure}

In this section, we conduct experiments to verify that the influences with large gradient norm on model parameters can be small or even zero. Instead of leave-one-sample-out retraining, we perform leave-one-token-out experiments on the PII-E dataset using the QWen1.5-0.5B model with the SGD optimizer. Since performing LOOR for all tokens is intractable, we sample 10 tokens from each of 10 samples and plot the relationship between the parameter change of each token and its gradient norm. As shown in Figure \ref{fig:large norm small influence}, tokens typically have minimal influence on parameters (less than $10^{-4}$), and tokens with large gradient norms can have small or zero influences on model parameters.

\subsection{LOOR Details on PII-E and PII-CR}
\label{sec:loor details}

LOOR for all pretraining samples is time-consuming. Hence, a subset of 100 pretraining samples and their instruction data are selected. To balance time and PII prediction accuracy, we choose QWen1.5-0.5B. We sequentially remove each pretraining data and compute the actual loss change for each $z_{test}$: $\Delta L(z_{test}, \theta^{LOOR}_{\frac{1}{n}, z_k})$. If the expected $target_{z_{test}}$ equals $\argmax_{k}\Delta L(z_{test}, \theta^{LOOR}_{\frac{1}{n}, z_k})$, LOOR is considered to agree with the expectation. This allows us to calculate the agreement ratio of the expected target and LOOR. Note that PII-E and PII-CR datasets share the same retraining proxy.

\subsection{Configurations for IFs}
\label{sec:if configs}
This section details the configurations of various IFs in experiments.
For LiSSA, we adhere to the settings used in \cite{bae_if_2022}. Gauss Newton Hessian (GNH) matrix is enabled, and the depth of LiSSA is set to match the length of the dataset. However, even with GNH, calculating Jacobian of model output still leads to Out-Of-Memory (OOM) for large models.
We employ the default settings for EK-FAC. Given that the EK-FAC algorithm supports limited types of deep learning layers, we transform the 1D-CNN layer into an equivalent Linear layer as guided in \cite{grosse_studying_2023}. We disregard layers in QWen1.5 series that lead to errors, such as Rotary Embedding and Layer Norm and no layer is disregarded for GPT2 series. 
$l$-RelatIF ($I^{l-relatif}_{loss}(z_{test}, z_k) = \frac{I^{hif}_{loss}(z_{test}, z_k)}{\sqrt{I^{hif}_{loss}(z_k, z_k)}}$) is utilized during experiments which, according to \cite{barshan_relatif_2020}, provides a better estimation than $\theta$-RelatIF.
In the case of TracInCp, we utilize three checkpoints saved during the training process. While additional checkpoints may potentially enhance tracing accuracy, it would also result in a longer time than EK-FAC, thereby negating the speed advantage of first-order IFs.

\end{document}